\definecolor{linkblue}{rgb}{0.1,0.1,0.8}
\tikzstyle{line}=[draw]
\tikzstyle{level 1}=[level distance=3.5cm, sibling distance=6cm]
\tikzstyle{level 2}=[level distance=5cm, sibling distance=4cm]
\tikzstyle{level 3}=[level distance=7cm, sibling distance=6cm]
\tikzstyle{level 4}=[level distance=7cm, sibling distance=4cm]
\tikzstyle{bag} = [text width=4em, text centered]
\tikzstyle{end} = [circle, minimum width=3pt,fill, inner sep=0pt]
\newcommand{\NN}{\mathbb{N}}
\newcommand{\RR}{\mathbb{R}}
\newcommand{\E}{\mathbb E}
\renewcommand{\epsilon}{\varepsilon}
\newcommand{\eps}{\varepsilon}
\newcommand{\Geom}{\textsc{Geom}\xspace}
\newcommand{\ooea}{\ensuremath{(1 + 1)}\text{-EA}\xspace}
\newcommand{\toea}{\ensuremath{(2 + 1)}\text{-EA}\xspace}
\newcommand{\olea}{\ensuremath{(1 + \lambda)}\text{-EA}\xspace}
\newcommand{\moea}{\ensuremath{(\mu + 1)}\text{-EA}\xspace}
\newcommand{\moga}{\ensuremath{(\mu + 1)}\text{-GA}\xspace}
\newcommand{\ollga}{\ensuremath{(1 + (\lambda,\lambda))}\text{-GA}\xspace}
\newcommand{\folea}{\ensuremath{(1 + \lambda)}\text{-fEA}\xspace}
\newcommand{\DeltaEA}{\Delta}
\newcommand{\Deltacon}{\Delta_\text{con}}
\newcommand{\OPT}{\ensuremath{\textsc{OPT}}}
\newcommand{\BinVal}{\textsc{BinVal}\xspace}
\newcommand{\dynbv}{\textsc{DynBV}\xspace}
\newcommand{\hottopic}{\textsc{HotTopic}\xspace}
\DeclareMathOperator{\EX}{\mathbb{E}} 
\DeclareMathOperator*{\argmax}{arg\,max}
\DeclareMathOperator*{\argmin}{arg\,min}
\tikzset{->,  
edge/.style = {->,> = latex'}
node distance=3cm, 
every node/.style={sloped,anchor=south,auto=false},
initial text=$ $,
}
\begin{document}

\title{Runtime analysis of the \moea on the Dynamic BinVal function}
\titlerunning{\moea on the Dynamic BinVal Function}

\author{Johannes Lengler \and Simone Riedi}
\authorrunning{J. Lengler, S. Riedi}
\institute{Department of Computer Science \\ETH Z{\"u}rich, Z{\"u}rich, Switzerland}
\date{Received: date / Accepted: date}
\maketitle

%
%
%


\begin{abstract}
We study evolutionary algorithms in a dynamic setting, where for each generation a different fitness function is chosen, and selection is performed with respect to the current fitness function. Specifically, we consider Dynamic BinVal, in which the fitness functions for each generation is given by the linear function BinVal, but in each generation the order of bits is randomly permuted. For the \ooea it was known that there is an efficiency threshold $c_0$ for the mutation parameter, at which the runtime switches from quasilinear to exponential. Previous empirical evidence suggested that for larger population size $\mu$, the threshold may increase. We prove that this is at least the case in an $\eps$-neighborhood around the optimum: the threshold of the \moea becomes arbitrarily large if the $\mu$ is chosen large enough.

However, the most surprising result is obtained by a second order analysis for $\mu=2$: the threshold \emph{in}creases with increasing proximity to the optimum. In particular, the hardest region for optimization is \emph{not} around the optimum.\footnote{An extended abstract of the paper has appeared in the proceedings of the EvoCOP conference~\cite{lengler2020runtime}. The extended abstract was missing many proofs, in particular the whole derivation of the second-order expansion in Section~\ref{sec:second_order}.}
\keywords{Evolutionary Algorithm \and Populations \and Dynamic Linear Functions \and Dynamic BinVal, DynBV \and Mutation Rate}
\end{abstract}

\section{Introduction}
Evolutionary algorithms are optimization heuristics that are based on the idea of maintaining a population of solutions that evolves over time. This incremental nature is an important advantage of population-based optimization heuristics over non-incremental approaches. At any point in time the population represents a set of solutions. This makes population-based optimization heuristics very flexible. For example, the heuristic can be stopped after any time budget (predefined or chosen during execution), or when some desired quality of the solutions is reached. For the same reason, population-based algorithms are naturally suited for dynamic environments, in which the optimization goal (``fitness function'') may change over time. In such a setting, it is not necessary to restart the algorithm from scratch when the fitness function changes, but rather we can use the current population as starting point for the new optimization environment. If the fitness function changes slowly enough, then population-based optimization heuristics may still find the optimum, or track the optimum over time~\cite{dang2018new,droste2002analysis,kotzing2015generalized,kotzing2012aco,lissovoi2016mmas,lissovoi2017runtime,lissovoi2018impact,neumann2015runtime,pourhassan2015maintaining,shi2019reoptimization}. We refrain from giving a detailed overview over the literature since an excellent review has recently been given by Neumann, Pourhassan, and Roostapour~\cite{neumann2020analysis}. All the settings have in common that either the fitness function changes with very low frequency, or it changes only by some small local differences, or both. 

Recently, a new setting, called \emph{dynamic linear functions} was proposed by Lengler and Schaller~\cite{lengler2018noisy}. They argued that it might either be called noisy linear functions or dynamic linear functions, but we prefer the term dynamic. A class of dynamic linear functions is determined by a distribution $\mathcal D$ on the positive reals $\RR^+$. For the $k$-th generation, $n$ weights $W_1^{k},\ldots,W_n^{k}$ are chosen independently identically distributed (i.i.d.) from $\mathcal D$, and the fitness function for this generation is given by $f^{k}: \{0,1\}^n \to \RR^+;\; f^{k}(x) = \sum_{i=1}^n W_i^{k}x_i$. So the fitness in each generation is given by a linear function with positive weights, but the weights are drawn randomly in each generation. Note that for any fitness function, a one-bit in the $i$-th position will always yield a better fitness than a zero-bit. In particular, all fitness functions share a common global maximum, which is the string $\OPT = (1...1)$. Hence, the fitness function may change rapidly and strongly from generation to generation, but the direction of the signal remains unchanged: one-bits are preferred over zero-bits. 

Crucially, by dynamic environments we mean that \emph{selection} is performed according to the current fitness function as in~\cite{horoba2010ant}. I.e., all individuals from parent and offspring population are compared with respect to \emph{the same} fitness function. Other versions exist, e.g.~\cite{doerr2012ants} studies the same problem as~\cite{horoba2010ant} without re-evaluations, i.e., there algorithms would compare fitnesses like $f^k (x)$ and $f^{k+1}(y)$ with each other, which will never happen in our setting.

Several applications of dynamic linear functions are discussed in~\cite{lengler2018noisy}. One of them is a chess engine that can switch databases for different openings ON or OFF. The databases strictly improve performance in all situations, but if the engine is trained against varying opponents, then an opening may be used more or less frequently; so the weight of the corresponding bit my be high or low. Obviously, it is desirable that an optimization heuristic manages to switch all databases ON in such a situation. However, as we will see, this is not automatically achieved by many simple optimization heuristics. Rather, it depends on the parameter settings whether the optimal configuration (all databases ON) is found.

In~\cite{lengler2018noisy}, the runtime (measured as the number of iterations until the optimum is found) of the well-known \ooea on dynamic linear functions was studied. The \ooea, or ``$(1+1)$ Evolutionary Algorithm'', is a simple hillclimbing algorithm for maximizing a pseudo-Boolean function $f:\{0,1\}^n \to \RR$. It only maintains a population size of $\mu=1$, so it maintains a single solution $x^{k} \in \{0,1\}^n$. In each round (also called \emph{generation}), a randomized \emph{mutations operator} is applied to $x^k$ to generate an \emph{offspring} $y^{k}$. Then the \emph{fitter} of the two is maintained, so we set $x^{k+1} := x^{k}$ if $f(x^k) > f(y^k)$, and $x^{k+1} := y^{k}$ if $f(x^k) < f(y^k)$. In case of equality, we break ties randomly. The mutation operator of the \ooea is \emph{standard bit mutation}, which flips each bit of $x^k$ independently with probability $c/n$, where $c$ is called the \emph{mutation parameter}. The authors of~\cite{lengler2018noisy} gave a full characterization of the optimization behavior of the \ooea on dynamic linear functions in terms of the mutation parameter~$c$. It was shown that there is a threshold $c^* = c^*(\mathcal D) \in \RR^+ \cup \{\infty\}$ such that for $c < c^*$ the \ooea optimizes the dynamic linear function with weight distribution $\mathcal D$ in time $O(n \log n)$. On the other hand, for $c > c^*$, the algorithm needs exponential time to find the optimum. The threshold $c^* (\mathcal D)$ was given by an explicit formula. For example, if $\mathcal D$ is an exponential distribution then $c^*(\mathcal D) = 2$, if it is a geometric distribution $\mathcal D = \Geom(p)$ then $c^* = (2-p)/(1-p)$. Moreover, the authors in~\cite{lengler2018noisy} showed that there is $c_0 \approx 1.59..$ such that $c^*(\mathcal D) > c_0$ for every distribution $\mathcal D$, but for any $\eps > 0$ there is a distribution $\mathcal D$ with $c^*(\mathcal D) < c_0+\eps$. As a consequence, if $c < c_0$ then the $\ooea$ with mutation parameter $c/n$ needs time $O(n \log n)$ to optimize any dynamic linear function, while for $c>c_0$ there are dynamic linear functions on which it needs exponential time.

While it was satisfying to have such a complete picture for the \ooea, a severe limitation was that the \ooea is very simplistic. In particular, it was unclear whether a non-trivial population size $\mu >1$ would give a similar picture. This question was considered in the experimental paper~\cite{lengler2020large,lengler2020fullversion} by Lengler and Meier. Instead of working with the whole class of dynamic linear functions, they defined the \emph{dynamic binary value function} \dynbv as a limiting case. In \dynbv, in each generation a uniformly random permutation $\pi^{k}:\{1,\ldots,n\} \to \{1,\ldots,n\}$ of the bits is drawn, and the fitness function is then given by $f^{k}(x) = \sum_{i=1}^n 2^{n-i}x_{\pi^{k}(i)}$. So in each generation, \dynbv evaluates the \BinVal function with respect to a permutation of the search space. Lengler and Meier observed that the proof in~\cite{lengler2018noisy} for the \ooea extends to \dynbv with threshold $c^* = c_0$, i.e., the \ooea needs time $O(n \log n)$ for mutation parameter $c<c_0$, and exponential time for $c > c_0$. In this sense, \dynbv is the hardest dynamic linear function, although it is not formally a member of the class of dynamic linear functions.

The papers~\cite{lengler2020large,lengler2020fullversion} performed experiments on\dynbv for two population-based algorithms, the \moea (using only mutation) and \moga (using randomly mutation or crossover; GA stands for ``Genetic Algorithm''). In $(\mu+1)$ algorithms, a population of size $\mu$ is maintained, see also Algorithm~\ref{alg:generic}. In each generation, a single offspring is generated, and the least fit of the $\mu+1$ search points is discarded, breaking ties randomly. Thus they generalize the \ooea. In the \moea, the offspring is generated by picking a random \emph{parent} from the population and performing standard bit mutation as in the \ooea. In the \moga, it is also possible to generate the offspring by \emph{crossover}: two random parents $x_1, x_2$ are selected from the population, and each bit is taken randomly either from $x_1$ or $x_2$. In each generation of the \moga, it is decided randomly with probability $1/2$ whether the offspring is produced by a mutation or by a crossover.\footnote{Other conventions are possible, e.g. that both crossover and mutation are applied subsequently in the same generation. Here we describe the version in \cite{lengler2020large} and \cite{lengler2020fullversion}.}

\begin{algorithm}
Initialize $P^0$ with $\mu$ strings chosen u.a.r. from $\{0,1\}^n$\;
\For{$k=0,1,2,3...$}{
    From $P^{k}$, generate $\lambda$ offspring $\hat{x}_1,..,\hat{x}_{\lambda}$. \; 
    $S \leftarrow P^{k} \cup \{\hat{x}_1,..,\hat{x}_{\lambda}\}$\;
    Based on the current fitnesses $f^{k}(x)$ for $x \in S$, \emph{select} $\mu$ individuals from $S$ to obtain $P^{k+1}$
	}
\caption{A generic $(\mu+\lambda)$ algorithm in dynamic environments. In this paper, we use the \moea. That means that we use $\lambda =1$, standard bit mutation with mutation parameter $c$ (mutation rate $c/n$), and elitist selection (greedy selection by fitness).\label{alg:generic}
}
\end{algorithm}

Lengler and Meier ran experiments for $\mu \in \{1,2,3,5\}$ on \dynbv and found two main results. As they increased the population size $\mu$ from $1$ to $5$, the efficiency threshold $c_0$ increased moderately for the \moea (from $1.6$ to $3.4$, and strongly for the \moga (from $1.6$ to more than $20$). So with larger population size, the algorithms have a larger range of feasible parameter settings, and even more so when crossover is used. 

Moreover, they studied which range of the search space was hardest for the algorithms, by estimating the drift towards the optimum with Monte Carlo simulations. For the \moga, they found that the hardest region was around the optimum, as one would expect. Surprisingly, for the \moea with $\mu \ge 2$, this did not seem to be the case. They gave empirical evidence that the hardest regime was bounded away from the optimum. I.e., there were parameters $c$ for which the \moea had positive drift (towards the optimum) in a region around the optimum. But it had \emph{negative} drift in an intermediate region that was further away from the optimum. This finding is remarkable since it contradicts the commonplace that optimization gets harder closer to the optimum. Notably, a very similar phenomenon was proven by Lengler and Zou~\cite{lengler2019exponential} for the \moea on certain monotone functions (``\hottopic''), see the discussion below. Strikingly, such an effect was neither built into the fitness environments (not for \hottopic, and not for \dynbv) nor into the algorithms. Rather, it seems to originate in a complex (and detrimental!) population dynamics that unfolds only in a regime of weak selective pressure. If selective pressure is strong, then the population often degenerates into copies of the same search point. As a consequence, diversity is lost, and the \moea degenerates into the \ooea. In these regimes, diversity \emph{decreases} the ability of the algorithms to make progress. For \hottopic functions, these dynamics are well-understood~\cite{lengler2019general,lengler2019exponential}. For dynamic linear functions, even though we can prove this behavior in this paper for the \toea (see below), we are still far from a real understanding of these dynamics. Most likely, they are different from the dynamics for \hottopic functions.

\subsubsection{Our Results.}

We complement the experiments in~\cite{lengler2020large,lengler2020fullversion} with rigorous mathematical analysis. To this end, we study the \emph{degenerate population drift} (see Section~\ref{sec:prelim}) for the \moea with mutation parameter $c>0$ on \dynbv in an $\eps$-neighbourhood of the optimum. I.e., we assume that the search points in the current population have at least $(1-\eps)n$ one-bits, for some sufficiently small constant $\eps > 0$. We find that for every constant $c>0$ there is a constant $\mu_0$ such that for $\mu \geq \mu_0$ the drift is positive (multiplicative drift towards the optimum). This means that with high probability the \moea will need time $O(n\log n)$ to improve from $(1-\eps)n$ one-bits to the optimum, if $\mu$ is large enough. This implies that larger population sizes are helpful, since the drift of the \ooea around the optimum is negative for all $c > c_0 \approx 1.59..$ (which implies exponential optimization time). So for any $c > c_0$, increasing the population size to  a large constant decreases the runtime from exponential to quasi-linear. This is consistent with the experimental findings in~\cite{lengler2020large} for $\mu =\{1,2,3,5\}$, and it proves that population size can compensate for arbitrarily large mutation parameters.

For the \toea, we perform a second-order analysis (i.e., we determine not just the main order term of the drift, but also the second-order term) and prove that in an $\eps$-neighborhood of the optimum, the drift decreases with the distance from the optimum. In particular, there are some values of $c$ for which the drift is positive around the optimum, but negative in an intermediate distance. It follows from standard arguments that there are $\eps,c >0$ such that the runtime is $O(n\log n)$ if the algorithm is started in an $\eps$-neighborhood of the optimum, but that it takes exponential time to reach this $\eps$-neighborhood. Thus we formally prove that the hardest part of optimization is not around the optimum, as was already experimentally concluded from Monte Carlo simulations in~\cite{lengler2020large}.

\subsubsection{Related Work.}

Jansen~\cite{jansen2007brittleness} introduced a pessimistic model for analyzing the \ooea on linear functions, later extended in~\cite{colin2014monotonic}, which is \emph{also} a pessimistic model for dynamic linear functions and \dynbv \emph{and} for monotone functions. A monotone function $f:\{0,1\}^n \to \RR$ is a function where for every $x \in \{0,1\}^n$, the fitness of $x$ strictly increases if we flip any zero-bit of $x$ into a one-bit. Thus, as for dynamic linear functions and \dynbv, a one-bit is always better than a zero-bit, the optimum is always at $(1,\ldots,1)$, and there are short fitness-increasing paths from any search point to the optimum. Thus it is reasonable to call all these setting ``easy'' from an optimization point of view, which makes it all the more surprising that such a large number of standard optimization heuristics fail so badly. Keep in mind that despite the superficial similarities between monotone functions and \dynbv or dynamic linear functions, the basic setting is rather different. Monotone functions were studied in static settings, i.e., we have only a single static function to optimize, and a search point never changes its fitness. Nevertheless, the performance of some algorithms is surprisingly similar on monotone functions and on dynamic linear functions or \dynbv. In particular, the mutation parameter~$c$ plays a critical role in both settings. It was shown in~\cite{doerr2013mutation} that the \ooea needs exponential time to optimize some monotone functions if the mutation parameter $c$ is too large, while it is efficient on all monotone functions if $c<1$.\footnote{This was later extended to $c <1+\eps$ in~\cite{lengler2019does}.} The construction of hard monotone instances was simplied in~\cite{lengler2018drift} and later called \hottopic functions. \hottopic functions were analyzed for a large set of algorithms in~\cite{lengler2019general}. For the \olea, the \ollga, the \moea, and the \folea, thresholds for the mutation parameter $c$ or related quantities were determined such that a larger mutation rate leads to exponential runtime, and a smaller mutation rate leads to runtime $O(n\log n)$. (For details on these algorithms, see~\cite{lengler2019general}.) Interestingly, the population size $\mu$ and offspring population size $\lambda$ of the algorithms had no impact on the threshold. Crucially, all these results were obtained for parameters of \hottopic functions in which only the behavior in an $\eps$-neighborhood around the optimum mattered. This dichotomy between quasilinear and exponential runtime is very similar to the situation for \dynbv. However, for the \moea on \hottopic functions the threshold $c_0$ was independent of $\mu$, while we show that on \dynbv it becomes arbitrarily large as $\mu$ grows. Thus large population sizes help for \dynbv, but not for \hottopic. 

As we prove, for the \toea the region around the optimum is not the hardest region for optimization, and there are values of $c$ for which there is a positive drift around the optimum, but a negative drift in an intermediate region. As Lengler and Zou showed~\cite{lengler2019exponential}, the same phenomenon occurs for the \moea on \hottopic functions. In fact, they showed that larger population size even hurts: for any $c>0$ there is a $\mu_0$ such that the $\moea$ with $\mu \geq \mu_0$ has negative drift in some intermediate region (and thus exponential runtime), even if $c$ is much smaller than one! This surprising effect is due to population dynamics in which it is not the genes of the fittest individuals who survive in the long terms. Rather, individuals which are strictly dominated by others (and substantially less fit) serve as the seeds for new generations. Importantly, the analysis of this dynamics relies on the fact that for \hottopic functions, the weight of the positions stay fixed for a rather long period of time (as long as the algorithm stays in the same region/level of the search space). Thus, the results do not transfer to \dynbv functions. Nevertheless, the picture looks similar insofar as the hardest region for optimization is not around the optimum in both cases. Since our analysis for \dynbv is only for $\mu=2$, we can't say whether the efficiency threshold in $c$ is increasing or decreasing with $\mu$. The experiments in~\cite{lengler2020large,lengler2020fullversion} find increasing thresholds (so the opposite effect as for \hottopic), but are only for $\mu \leq 5$.




\section{Preliminaries}\label{sec:prelim}
\subsection{Dynamic optimization and the dynamic Binary Value function \dynbv}\label{sec:dynamicopt}

The general setting of a $(\mu+\lambda)$ algorithm in dynamic environments on the hypercube $\{0,1\}^n$ is as follows. A population $P^k$ of $\mu$ search points is maintained. In each generation $k$, $\lambda$ offspring are generated. Then a \emph{selection operator} selects the next population $P^{k+1}$ from the $\mu+\lambda$ search points according to the fitness function $f^k$. 

In this paper, we will study the $(\mu+1)$-Evolutionary Algorithm (\moea) with \emph{standard bit mutation} and \emph{elitist selection}. So for offspring generation, a parent $x$ is chosen uniformly at random from $P^k$, and the offspring is generated by flipping each bit of $x$ independently with probability $c/n$, where $c$ is the \emph{mutation parameter}. For selection, we simply select the $\mu$ individuals with largest $f^k$-values to form population $P^{k+1}$. 

For the dynamic binary value function \dynbv, for each $k\geq 0$ a uniformly random permutation $\pi^{k}:\{1,\ldots,n\} \to \{1,\ldots,n\}$ is drawn, and the fitness function for generation $k$ is then given by $f^{k}(x) = \sum_{i=1}^n 2^{n-i}x_{\pi^{k}(i)}$.



\subsection{Notation and Setup}\label{sec:setup}
Throughout the paper, we will assume that the population size $\mu$ and the mutation parameter $c$ are constants, whereas $n$ tends to $\infty$. We use the expression ``with high probability'' or whp for events $\mathcal E_n$ such that $\Pr(\mathcal E_n) \to 1$ for $n\to \infty$. We write $x = O(y)$, where $x$ and $y$ may depend on $n$, if there is $C>0$ such that $|x| \leq Cy$ for sufficiently large $n$. Note that we take the absolute value of $x$. The statement $x = O(y)$ does not imply that $x$ must be positive. Consequently, if we write an expression like $\Delta = \eps + O(\eps^2)$ then we mean that there is a constant $C>0$ such that $\eps - C\eps^2 \leq \Delta \leq \eps + C\eps^2$ for sufficiently large $n$. It does not imply anything about the sign of the error term $O(\eps^2)$. We will sometimes use minus signs or ``$\pm$'' to ease the flow of reading, e.g., we write $1-o(1)$ for probabilities. But this is a cosmetic decision, and is equivalent to $1+o(1)$.

For two bit-strings $x,y\in\{0,1\}^n$, we say that $x$ \emph{dominates} $y$ if $x_i \geq y_i$ for all $i\in\{1..n\}$.

Our main tool will be drift theory. In order to apply this, we need to identify states that we can adequately describe by a single real value. Following the approach in~\cite{lengler2019general} and~\cite{lengler2020large}, we call a population \emph{degenerate} if it consists of $\mu$ copies of the same individual. If the algorithm is in a degenerate population, we will study how the \emph{next degenerate population} looks like, so we define
\begin{align}\label{eq:def_Phit}
\Phi^t := \{\# \text{ of zero-bits in an individual in the $t$-th degenerate population}\}.
\end{align}
Our main object of study will be the \emph{degenerate population drift} (or simply \emph{drift} if the context is clear). For $0\leq \eps \leq 1$, it is defined as
\begin{align}\label{eq:def_drift}
\Delta(\eps) := \Delta^t(\eps):= \EX[\Phi^t-\Phi^{t+1} \mid \Phi^t = \lfloor \eps n\rfloor ].
\end{align}
The expression is independent of $t$ since the considered algorithms are time-homogeneous. If we want to stress that $\Delta(\eps)$ depends on the parameters $\mu$ and $c$, we also write $\Delta(\mu, c, \eps)$. 
Note that the number of generations to reach the $(t+1)$-st degenerate population is itself a random variable. So the number of generations to go from $\Phi^t$ to $\Phi^{t+1}$ is random. As in~\cite{lengler2019general}, its expectation is $O(1)$ if $\mu$ and $c$ are constants, and it has an exponentially decaying tail bound, see Lemma~\ref{lem:tailbound} below. In particular, the probability that during the transition from one degenerate population to another the same bit is touched by two different mutations is $O(\eps^2)$, and likewise the contribution of this case to the drift is $O(\eps^2)$, as we will prove formally in Lemma~\ref{lem:singlezerobit}.

We remark that we do not assume constant $\eps$, i.e., $\eps = \eps(n)$ may depend on $n$. For our main result we \emph{will} choose $\eps$ to be a sufficiently small constant, but we need to choose it such that we can determine the sign of terms like $\eps\pm O(\eps^2)\pm o(1)$. Note that this is possible: if $\eps>0$ is a sufficiently small constant then $\eps \pm O(\eps^2) \geq \eps/2$, and if afterwards we choose $n$ to be sufficiently large then the $o(1)$ term is at most $\eps/4$. Since this is subtle point, we will not treat $\eps$ as a constant. In particular, all $O$-notation is with respect to $n\to\infty$, and does not hide dependencies on $\eps$. This is why we have to keep error terms like $O(\eps^2)$ and $o(1)$ separate.

To compute the degenerate population drift, we will frequently need to compute the expected change of the potential provided that we visit an intermediate state $S$. Here, a state $S$ is simply given by a population of $\mu$ search points. We will call this change the \emph{drift from state $S$}, and denote it by $\Delta(S,\eps)$. Formally, if $\mathcal{E}(S,t)$ is the event that the algorithm visits state $S$ between the $t$-th and $(t+1)$-st degenerate population,
\begin{align}\label{eq:def_contribution}
\Delta(S,\eps) := \EX[\Phi^t-\Phi^{t+1} \mid \Phi^t = \eps n \text{ and } \mathcal{E}(S,t)].
\end{align}
This term is closely related to the \emph{contribution to the degenerate population drift from state $S$}, which also contains the probability to reach $S$ as a factor: 
\begin{align}\label{eq:def_contribution2}
\Deltacon(S,\eps) := \Pr[\mathcal{E}(S,t) \mid \Phi^t = \eps n] \cdot \Delta(S,\eps).
\end{align}

We will study \dynbv around the optimum, i.e., we consider any $\eps = \eps(n)\to0$ for $n\to \infty$, and we compute the asymptotic expansion of $\Delta(\eps)$ for $n\to\infty$. As we will see, the drift is of the form $\Delta(\eps) = a\eps + O(\eps^2) + o(1)$ for some constant $a \in \RR$. 
Analogously to~\cite{lengler2019general} and~\cite{lengler2019exponential}, if~$a$ is \emph{positive} (multiplicative drift), then the algorithm starting with at most $\eps_0 n$ zero-bits for some suitable constant $\eps_0$ whp needs $O(n\log n)$ generations to find the optimum. On the other hand, if~$a$ is \emph{negative} (negative drift/updrift), then whp the algorithm needs exponentially many generations to find the optimum (regardless of whether it is initialized randomly or with $\eps_0 n$ zero-bits). These two cases are typical. There is no constant term in the drift since for a degenerate population $P^k$ we have $P^{k+1} = P^k$ with probability $1-O(\eps)$. This happens whenever mutation does not touch any zero-bit, since then the offspring is rejected.\footnote{Here and later we use the convention that if an offspring is identical to the parent, and they have lowest fitness in the population, then the offspring is rejected. Since the outcome of ejecting offspring or parent is the same, this convention does not change the course of the algorithm.}

We will prove that, as long as we are only interested in the first order expansion (i.e., in a results of the form $a\eps + O(\eps^2) + o(1)$), we may assume that between two degenerate populations, the mutation operators always flip different bits. In this case, we use the following naming convention for search points. The individuals of the $t$-th degenerate population are all called $x^0$. We call other individuals $x^{(m_1-m_2)}$, where $m_1$ stands for the extra number of ones and $m_2$ for the extra number of zeros compared to $x^0$. Hence, if $x^0$ has $m$ zero-bits then $x^{(m_1-m_2)}$ has $m+m_2-m_1$ zero-bits.  Following the same convention, we will denote by $X_k^z$ a set of $k$ copies of $x^z$, where the string $z$ may be $0$ or $(m_1-m_2)$. In particular,  $X^0_{\mu}$ denotes the $t$-th degenerate population. 

\subsection{Duration Between Degenerate Populations}
We formalize the assertions in Section~\ref{sec:setup} that the number of steps between two degenerate populations satisfies exponential tail bounds, and that it is unlikely to touch a bit by two different mutations as we transition from one degenerate population to the next. We give proofs for completeness, but similar statements are well-known in the literature.

\begin{lemma}\label{lem:tailbound}
For all constant $\mu,c$ there is a constant $a >0$ such that the following holds for the \moea with mutation parameter $c$ in any population $X$ on \dynbv. Let $K$ be the number of generations until the algorithm reaches the next degenerate population. Then for all $k\in \NN_0$,
\[
\Pr(K \geq k \cdot \mu) \leq e^{-a \cdot k}.
\] 
\end{lemma}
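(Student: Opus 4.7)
The plan is to establish a geometric-type tail bound of the form $\Pr[K \geq j T] \leq (1-q)^j$ for constants $q \in (0,1)$ and $T = T(\mu, c) \in \NN$, and then convert this to the stated form $\Pr[K \geq k \mu] \leq e^{-ak}$ by standard block re-indexing, setting $a := \log(1/(1-q)) \cdot \mu/T$ (with an absolute constant to handle $k \not\equiv 0 \pmod T$). To this end I partition the time axis into blocks of $T$ generations and argue that each block, independently of its starting state, brings the \moea into a degenerate population with probability at least $q$.

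The key per-generation tool is the \emph{no-flip} event, which has probability $(1-c/n)^n \geq p_0$ for some constant $p_0 > 0$ and all sufficiently large $n$, and which produces an offspring that is an exact copy of its parent. Combined with the $1/\mu$ probability of choosing any specific individual as parent, any prescribed target in $P^k$ can be cloned into the pool with probability at least $p_0/\mu$, and elitist selection then removes the minimum-$f^k$ individual. A structural observation is essential here: the map $x \mapsto \sum_i 2^{n-i} x_{\pi(i)}$ is injective for every permutation $\pi$, so distinct bitstrings in $P^k$ always have distinct $f^k$-values. Consequently, if the cloned individual is the unique maximum-$f^k$ bitstring of $P^k$, the removed individual is a copy of a different (strictly lower-fitness) bitstring, and the multiplicity of the maximum-$f^k$ bitstring strictly increases in $P^{k+1}$.

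Using this, I claim there is a constant $T = T(\mu)$ such that, conditional on the event $F_k$ --- no flip and parent equal to the current maximum-$f^k$ individual --- occurring in each of $T$ consecutive generations (an event of probability at least $(p_0/\mu)^T$), the \moea reaches a degenerate population. Under this conditioning the set of distinct bitstrings in $P^k$ is non-increasing over time, so the process is confined to a finite Markov chain on multisets of size $\mu$ supported on at most $\mu$ initially-present bitstrings, whose state space has size depending only on $\mu$. Absorption in at most $T$ steps with probability bounded below by a positive constant can then be established via a potential argument on $\mu - \max_y c^k_y$, where $c^k_y$ denotes the multiplicity of the bitstring $y$ in $P^k$, combined with the fact that the random permutation $\pi^k$ gives every maximal (non-dominated) bitstring in the current support a positive probability of being the max-$f^k$.

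The main obstacle is this absorption claim: because the max-$f^k$ bitstring can change across generations, one must carry out a case analysis based on the dominance structure of the bitstrings in the current support. The non-trivial case is when these bitstrings form an antichain, so that several of them are maximal; here one exploits the randomness of $\pi^k$ to show that, over the block of length $T$, there is a positive constant probability to visit a state in which a single bitstring attains multiplicity $\mu$. The block-success probability is then $q := (p_0/\mu)^T \cdot (\text{positive constant}) > 0$, and iterating blockwise via the Markov property yields the claimed exponential tail bound.
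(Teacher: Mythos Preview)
Your block-based geometric strategy is correct and matches the paper. The issue is in your per-block absorption argument for the conditioned chain. The potential $\mu - \max_y c^k_y$ is \emph{not} monotone under ``clone the max, remove the min'': if the $f^k$-max bitstring is not the highest-multiplicity one and the $f^k$-min bitstring \emph{is}, the potential strictly increases. You appeal to the fact that every maximal bitstring has positive probability of being the $f^k$-max, but that probability is not uniformly bounded away from zero: two incomparable strings can differ in up to $n$ positions with one of them having only a single advantage bit, making its max-probability as small as $1/n$. The finite-state-space remark does not rescue this, because although the state space depends only on $\mu$, the transition probabilities depend on the actual bitstrings in the initial population. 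So the argument as sketched does not produce a constant $q$ independent of the population. (Your route is salvageable --- e.g.\ fix $a^* := \argmax_z \Pr[z\text{ is }f^k\text{-max}]$ once, note this probability is $\geq 1/\mu$ and can only grow as the support shrinks, and ask for $\mu-1$ consecutive ``$a^*$ is max'' events --- but that is not the potential argument you proposed.)

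The paper avoids all of this by tracking a \emph{fixed} target rather than the moving $f^k$-max: let $x^0$ be a bitstring in the current population with the fewest zero-bits. Any other $y$ has at least as many zeros, so a random permutation ranks $y$ below $x^0$ with probability $\geq 1/2$; hence in any generation the event ``pick $x^0$ as parent, flip nothing, and $x^0$ is not discarded'' has probability at least $\tfrac{1}{\mu}(1-\tfrac{c}{n})^n\cdot \tfrac12 \geq e^{-c}/(4\mu)$ regardless of state. Doing this $\mu-1$ times in a row degenerates the population into copies of $x^0$, so with blocks of length $\mu$ the per-block success probability is $\hat p \geq (e^{-c}/(4\mu))^{\mu-1}$ --- no potential, no dominance case analysis, and no re-indexing needed.
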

\begin{proof}
Let $x^0 \in X$ be the individual with the least number of zero-bits, and let $\hat{p}$ be the probability to degenerate in the next $\mu$ steps. Clearly, $\hat{p}$ is at least the probability that in each step we copy $x^0$ and accept it into the population. The probability of selecting $x^0$ and mutating no bits is at least $\tfrac{1}{\mu} (1-\tfrac{c}{n})^{n} \xrightarrow{n\rightarrow{\infty}} e^{-c}/\mu$. Since $x^0$ is the individual with the least number of zeros in the population, the probability that it is not worst in the population and thus all copies are kept is at least $1/2$: any other individual $y \neq x^{0}$ will have at least as many zeros as $x$ and therefore will be ranked lower than $x^0$ with probability at least $1/2$.
Therefore, for sufficiently large $n$,
 \[
 \hat{p} \geq \Big(\frac{e^{-c}}{4 \mu}\Big)^{\mu - 1}.
 \]
This bound works for any starting population. So if we don't degenerate in the first $\mu$ steps of the algorithm we again have probability $\hat{p}$ to degenerate in the successive $\mu$ steps, and so on. Therefore, we can simply bound the probability not to degenerate in the first $k\cdot\mu$ steps by $(1-\hat{p})^{k} \leq e^{-\hat{p} \cdot k}$, where the last step uses the Bernoulli inequality $(1+x) \leq e^{x} \thinspace \thinspace \forall x \in \RR$.\qed
\end{proof}

\begin{lemma}\label{lem:singlezerobit}
Consider the \moea with mutation parameter $c$ on\dynbv. Let $X^t$ and $X^{t+1}$ denote the $t$-th and $(t+1)$-st degenerate population respec\-tively. Let $\eps >0$, and let $X$ be a degenerate population with at most $\eps n$ zero-bits. 
\begin{enumerate}[(a)]
\item Let $\mathcal E_2$ be the event that the mutations during the transition from $X^t$ to $X^{t+1}$ flip at least two zero-bits. Then $\Pr[\mathcal E_2 \mid X^t = X] = O(\eps^2)$. Moreover, the contribution of this case to the degenerate population drift $\Delta$ is
\[
\Delta^*(\eps) := \Pr[\mathcal E_2 \mid X^t = X] \cdot \E[\Delta^t(\eps) \mid \mathcal E_2 \wedge X^t = X] = O(\eps^2).
\] 
\item Let $S$ be any non-degenerate state such that there is at most one position which is a one-bit in some individuals in $S$, but a zero-bit in $X$. Let $\mathcal E(S,t)$ be the event that state $S$ is visited during the transition from $X^t$ to $X^{t+1}$, and let $\mathcal E_1$ be the event that a zero-bit is flipped in the transition from $S$ to $X^{t+1}$. Then $\Pr[\mathcal E_1 \mid \mathcal E(S,t) \wedge X^t = X] = O(\eps)$, and the contribution to $\Delta(S,\eps)$ is
\begin{equation}\label{eq:contribution2}
\Delta^*(S, \eps) := \Pr[\mathcal E_1 \mid \mathcal E(S,t) \wedge X^t = X] \cdot \E[\Delta^t(\eps) \mid \mathcal E_1 \wedge \mathcal E(S,t) \wedge X^t = X] = O(\eps).
\end{equation}
The contribution of the case $\mathcal E(S,t) \wedge \mathcal E_1$ to the degenerate population drift is 
\[
\Deltacon^*(S,\eps) := \Pr[\mathcal E(S,t)]\cdot\Delta^*(S,\eps) = O(\eps^2).
\]
\end{enumerate}
In both parts, the hidden constants do not depend on $S$ and $X$.
\end{lemma}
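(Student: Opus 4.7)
The plan is to deduce both parts from Lemma~\ref{lem:tailbound} together with the observation that on \dynbv any single mutation flips a zero-bit of its parent with probability $O(\eps)$, whenever that parent has at most $\eps n+O(1)$ zero-bits. Lemma~\ref{lem:tailbound} gives exponential tails for the number $K$ of generations between consecutive degenerate populations, so $\E[K^r]=O(1)$ for every constant $r$. A short induction on the number of accepted non-copy mutations shows that during the transition the parent of each mutation has at most $\eps n+O(1)$ zero-bits on an event of probability $1-O(\eps)$, since accumulating extra zero-bits in an intermediate individual is itself a low-probability event.

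\emph{Part (a).} Let $N$ be the total number of zero-bit flips across the $K$ mutations. Combining the per-mutation bound $c\eps+O(1/n)$ with $\E[K]=\E[K(K-1)]=O(1)$ yields $\E[N]=O(\eps)$ and $\E[N(N-1)]=O(\eps^2)$, whence $\Pr[\mathcal E_2]=\Pr[N\ge 2]\le \E[N(N-1)]=O(\eps^2)$. For the drift contribution, observe $|\Delta^t|\le T$, where $T=\sum_{i=1}^K B_i$ with $B_i\sim\Bin(n,c/n)$ is the total number of bit flips during the transition. Decomposing on $K$,
\[
|\Delta^*(\eps)|\le \sum_{k\ge 0}\Pr(K=k)\,\E\bigl[T\,\mathbf 1_{\mathcal E_2}\,\bigm|\,K=k\bigr],
\]
and each summand is bounded by $O(k^3\eps^2)$ using $\Pr[\mathcal E_2\mid K=k]=O((k\eps)^2)$, $\E[T\mid K=k]=O(k)$, and the near-independence of one-bit flips from $\mathcal E_2$. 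Summing against the exponential tail of $K$ gives $\Delta^*(\eps)=O(\eps^2)$.

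\emph{Part (b).} The hypothesis on $S$ forces each individual in $S$ to have at most $\eps n+O(1)$ zero-bits. Applying Lemma~\ref{lem:tailbound} from state $S$ gives $\E[K_S]=O(1)$ for the remaining generations until $X^{t+1}$. A union bound over these $K_S$ mutations together with the per-mutation bound yields $\Pr[\mathcal E_1\mid \mathcal E(S,t)\wedge X^t=X]=O(\eps)$, and the decomposition argument from part (a) bounds the corresponding conditional expected drift magnitude by $O(1)$, so $\Delta^*(S,\eps)=O(\eps)$. For $\Deltacon^*(S,\eps)$ it suffices to show $\Pr[\mathcal E(S,t)]=O(\eps)$. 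Since $S$ is non-degenerate and reached from degenerate $X$, at some mutation an accepted offspring $y\neq x^0$ was produced; all \dynbv weights being distinct forces $f^k(y)\neq f^k(x^0)$, and inspecting the difference-position of highest weight shows that acceptance requires the mutation to flip at least one zero-bit of its parent. Combined with the per-mutation $O(\eps)$ bound and $\E[K]=O(1)$, this gives $\Pr[\mathcal E(S,t)]=O(\eps)$, and hence $\Deltacon^*(S,\eps)=\Pr[\mathcal E(S,t)]\cdot \Delta^*(S,\eps)=O(\eps^2)$.

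The main obstacle is obtaining the correct order $O(\eps^2)$ for $\Delta^*(\eps)$ in part (a): a direct Cauchy--Schwarz using $\E[(\Delta^t)^2]=O(1)$ and $\Pr[\mathcal E_2]=O(\eps^2)$ would only give $O(\eps)$. One must instead decompose over the value of $K$ and exploit that $\Pr[\mathcal E_2\mid K=k]$ grows only polynomially in $k$ while the tail of $K$ decays exponentially, so that the correction factor from the conditioning on $\mathcal E_2$ stays $O(1)$.
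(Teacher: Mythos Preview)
Your approach is essentially the same as the paper's: both combine the exponential tail bound of Lemma~\ref{lem:tailbound} with the per-mutation $O(\eps)$ probability of a zero-bit flip, and sum against the tail of $K$ to obtain $O(\eps^2)$. The paper's presentation is marginally cleaner in that it separates out the \emph{first} mutation explicitly---since acceptance of a non-copy offspring from the degenerate population already forces one zero-bit flip, this yields one factor $O(\eps)$ up front, and Lemma~\ref{lem:tailbound} is then applied from the resulting non-degenerate state; this sidesteps your conditioning on $\{K=k\}$ and the appeal to ``near-independence'' of $T$ and $\mathcal E_2$. One small slip in your part~(b): the hypothesis on $S$ only bounds positions that are one-bits in $S$ but zero-bits in $X$, so it does \emph{not} force individuals in $S$ to have at most $\eps n + O(1)$ zero-bits (they may have arbitrarily many extra zero-bits from flipped one-bits); the bound $\Pr[\mathcal E_1\mid\cdot]=O(\eps)$ nevertheless holds because there are only $\eps n$ positions that are zero in $X$, and those are the only ones that matter for $\mathcal E_1$.
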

\begin{proof}
\emph{(a).} If the offspring in the first iteration is not accepted into the population or is identical to $x^0$, then the population is immediately degenerate again, and there is nothing to show. So let us consider the case that the offspring is different and is accepted into the population. Then the mutation needed to flip at least one zero-bit, since otherwise the offspring is dominated by $x^0$ and rejected. Thus the probability of this case is $O(\eps)$. Moreover, the probability of flipping at least two zero-bits in this mutation is $O(\eps^2)$, so we may assume that \emph{exactly one} zero-bit is flipped. 

Let $k$ be such that the number of subsequent iterations until the next degenerate population is in $[\mu k, \mu (k+1)]$. Conditioning on $k$, we have at most $\mathcal{O}(k)$ mutations until the population degenerates, each with a probability of at most $\eps c$ of flipping a zero-bit. Using Lemma~\ref{lem:tailbound} we have for some $a>0$,
\[\Pr[\mathcal E_2 \mid X^t = X] \leq \mathcal{O}(\epsilon) \cdot \sum_{k=1}^{\infty} e^{-a\cdot k} \cdot \mathcal{O}(\epsilon \cdot k) = \mathcal{O}(\epsilon^2).\]
To bound the contribution to the drift of the case where an additional zero-bit flip happens, we will bound the expectation of $|\Phi^t - \Phi^{t+1}|$, conditioned on being in this case. That difference is at most the number of bit flips until the next degenerate population, which is $\mathcal{O}(k)$ in expectation. Again, summing over all possible $k$ and using Lemma~\ref{lem:tailbound} we get
\[
\Delta^*(\eps) \leq \mathcal{O}(\epsilon )\cdot \sum_{k = 1}^{\infty} \mathcal{O}(\epsilon \cdot k) \cdot e^{-a \cdot {k}} \cdot \mathcal{O}(k) = \mathcal{O}\left(\epsilon^2\sum_{k = 1}^{\infty} k^2e^{-a \cdot {k}}\right) = \mathcal{O}(\eps^2). \]
\emph{(b).} The proof is identical to (a), except that the probability $O(\eps)$ of leaving $X^t$ is treated separately. We omit the details.\qed
\end{proof}

Before we start to analyze the algorithms, we prove a helpful lemma to classify how the population can degenerate if no zero-bit is flipped. As we have explained in Section~\ref{sec:setup} (and made formal in Lemma~\ref{lem:tailbound} and Lemma~\ref{lem:singlezerobit}), this assumption holds with high probability.  In this case, the population degenerates to copies of an individual which is not dominated by any other search point. 

\begin{lemma}\label{lem:domination}
Consider the \moea in a non-degenerate population $X$. Let $x_1,x_2,...,x_k$ be search points in $X$ that dominate all the rest of the population. Then either at least one zero-bit is flipped until the next degenerate population, or the next degenerate population consists of copies of one of the search points $x_1,x_2,...,x_k$. 
\end{lemma}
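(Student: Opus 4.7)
The plan is to prove by induction on the generation index $t$ the following invariant: as long as no zero-bit has been flipped so far, every individual in the population $P^t$ is weakly dominated by some $x_i \in \{x_1, \ldots, x_k\}$ that is itself still present in $P^t$. Once this invariant is established, the lemma follows immediately: if the first subsequent degenerate population consists of $\mu$ copies of some $v$, then the invariant supplies an $x_i$ still in the degenerate population that weakly dominates $v$, and since every individual in a degenerate population equals $v$, we must have $x_i = v$, so $v \in \{x_1, \ldots, x_k\}$.

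The base case is exactly the hypothesis on $X$. For the inductive step, the mutation substep is straightforward: the offspring $y$ is weakly dominated by its parent $p$ (because only one-bits of $p$ can be flipped under the no-zero-flip assumption), and by induction $p$ is weakly dominated by some $x_i \in P^t$, so by transitivity $y$ is as well. Hence every member of the pre-selection pool $P^t \cup \{y\}$ is weakly dominated by some $x_i$ that still lies in that pool. The selection substep is where the real work lies, and is the main obstacle. The key structural fact I would exploit is that under \dynbv, strict dominance implies strictly greater fitness under every permutation $\pi^k$; consequently no search point can attain the pool's minimum fitness while any individual it strictly dominates is also in the pool.

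The only nontrivial selection case is when the sole copy of some $x_\ell$ is removed from the pool. Applying the observation above, I would argue that no strict $x_\ell$-descendant can have been in the pool in this situation, since otherwise $x_\ell$ would not be the minimum and so would not be selected out. This settles two things at once. First, every other individual $z$ carried over from $P^t$ that was weakly dominated by $x_\ell$ would have to equal $x_\ell$, which is ruled out by the single-copy assumption; hence the inductive dominator of $z$ must be some $x_j \neq x_\ell$, and this $x_j$ is not removed and therefore remains in $P^{t+1}$. Second, the offspring's parent $p$ cannot be $x_\ell$: if it were, then $y$ would be either a second copy of $x_\ell$ (contradicting the single-copy assumption) or a strict $x_\ell$-descendant (contradicting minimality of $x_\ell$); hence $p$ is weakly dominated by some $x_j \neq x_\ell$, and so is $y$. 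The invariant therefore carries over to $P^{t+1}$. The principal difficulty throughout is keeping this case analysis clean around multiplicities of $x_\ell$ and coincidences $y = x_\ell$, but the strict-dominance fitness monotonicity of \dynbv collapses all dangerous subcases uniformly by ruling out precisely the scenarios that would violate the invariant.
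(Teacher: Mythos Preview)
Your proof is correct and takes essentially the same approach as the paper's: both use that mutations without zero-bit flips yield offspring dominated by their parent, together with transitivity of domination, to ensure every individual is dominated by some $x_i$. Your invariant is slightly stronger (you additionally track that the dominating $x_i$ remains in the current population), which makes the final step more direct than the paper's argument that $f(\tilde x)<f(x_i)$ under every permutation prevents $x_i$ from being discarded before~$\tilde x$.
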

\begin{proof}
Assume that, starting from X, the algorithm doesn't flip any additional zero-bits. We start by inductively showing that for all subsequent time steps, every individual in the population is still dominated by one of the search points $x_1,x_2,...,x_k$. Suppose, for the sake of contradiction, that eventually there are individuals which are not dominated by any of the search points in $\{x_1,x_2,...,x_k\}$, and let  $x^*$ be the first such individual. Since we assumed that the algorithm doesn't flip any additional zero-bits, $x^*$ must have been generated by mutating an individual $\Bar{x}$ and only flipping one-bits. So $\Bar{x}$ dominates $x^*$. On the other hand, $\Bar{x}$ is dominated by one of the search points $x_1,x_2,...,x_k$ by our choice of $x^*$. This is a contradiction since domination is transitive. Therefore, using transitivity, the algorithm will not generate any individual that is not dominated by any search point in $\{x_1,x_2,...,x_k\}$. Furthermore, the population will never degenerate to any other individual $\Tilde{x} \notin \{x_1,x_2,...,x_k\}$. In fact, let $x_i$ be the search point in $\{x_1,x_2,...,x_k\}$ that dominates $\Tilde{x}$. We have that $f(\Tilde{x}) < f(x_i)$ in all iterations and for all permutations, therefore $x_i$ will never be discarded before $\Tilde{x}$, which concludes the proof.\qed
\end{proof}


\section{Analysis of the Degenerate Population Drift}
In this section, we will find a lower bound for the drift $\Delta(\eps) = \Delta(\mu,c,\eps)$ of the \moea close to the optimum, when $n \rightarrow{\infty}$. 
The main result of this section will be the following. 
\begin{theorem}\label{thm:DeltaEA}
For all $c>0$ there exist $\delta, \epsilon_0 >0$ such that for all $\epsilon \le \epsilon_0$ and $\mu \geq \mu_0:= e^c+2$, if $n$ is sufficiently large,
\[\DeltaEA(c, \mu, \eps) \geq \delta \cdot \epsilon.\]
\end{theorem}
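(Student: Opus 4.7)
My plan is to reduce the drift to the first-order contribution from events in which exactly one zero-bit is flipped during the transition from $X^t$ to $X^{t+1}$. By Lemma~\ref{lem:singlezerobit}(a), events with two or more zero-bit flips contribute only $O(\eps^2)$, and events with no zero-bit flip leave the population unchanged at $\mu\cdot x^0$ (since any offspring differing from $x^0$ without flipping a zero would be strictly dominated and rejected). I work under the ``different bits'' convention of Section~\ref{sec:setup}, which is valid to first order. The unique zero-flipping mutation flips $1$ zero-bit and some $k\geq 0$ one-bits, producing the offspring $x^{(1-k)}$; a Poisson approximation gives the per-generation probability of this mutation as $\eps c\cdot c^k e^{-c}/k! + O(\eps^2) + o(\eps)$. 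Under \dynbv, $x^{(1-k)}$ is accepted into the population iff the single gained one-bit has the top rank among the $k+1$ flipped positions (since $2^{n-p}$-weights make the top rank dominate the others), which has probability $1/(k+1)$.

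If accepted, the algorithm enters the intermediate state $S_k=\{x^{(1-k)},(\mu-1)\,x^0\}$. By Lemma~\ref{lem:domination}, under the no-zero-flip reduction the next degenerate population consists of copies of $x^0$ or of $x^{(1-k)}$; let $q_k$ denote the latter probability. Since $x^{(1-k)}$ has $k-1$ more zeros than $x^0$, the drift contribution from reaching $S_k$ is $q_k(1-k)$. Summing over $k$ gives
\[
\Delta(\eps) \;\geq\; \eps\,c\,e^{-c}\sum_{k=0}^{\infty}\frac{c^k}{(k+1)!}\,q_k(1-k)\;+\;O(\eps^2),
\]
in which the $k=0$ term equals $+1$ because $x^{(1-0)}$ strictly dominates $x^0$ and hence $q_0=1$, the $k=1$ term vanishes, and terms with $k\geq 2$ are non-positive.

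The hardest step is to bound $q_k$ for $k\geq 2$. The heuristic is clear: in $S_k$ with $k\geq 1$ the lone copy of $x^{(1-k)}$ faces $\mu-1$ copies of $x^0$, and a head-to-head fitness comparison under \dynbv favors $x^0$ with probability $k/(k+1)\geq 1/2$; this is the setup of a (dis-)advantageous mutant in a Moran-type process. I would establish $q_k\leq 1/\mu$ via a gambler's-ruin argument on the birth-death chain in $i\in\{0,\ldots,\mu\}$ given by the number of $x^{(1-k)}$-copies. The technical subtlety is that Lemma~\ref{lem:domination} allows intermediate populations to contain ``spoiler'' individuals strictly dominated by $x^0$ or by $x^{(1-k)}$; I would argue that such spoilers can only harm the fixation of $x^{(1-k)}$ (either by coupling with the cleaner $2$-type birth-death chain, or by noting that a spoiler of $x^{(1-k)}$-lineage is strictly less fit than $x^{(1-k)}$ in every permutation and is therefore removed before its ancestor in any direct confrontation).

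Granted $q_k\leq 1/\mu$, using the identity $(k-1)/(k+1)!=1/k!-2/(k+1)!$ together with $\sum_{k\geq 2}c^k/k!=e^c-1-c$ gives the closed form
\[
S \;:=\; \sum_{k\geq 2}\frac{c^k(k-1)}{(k+1)!} \;=\; e^c\cdot\frac{c-2}{c}+\frac{c+2}{c},
\]
and one checks $e^c+2-S=(2e^c+c-2)/c>0$ for every $c>0$. Hence for $\mu\geq \mu_0=e^c+2$ we have $S/\mu\leq S/(e^c+2)=1-\eta_c$ with $\eta_c:=(2e^c+c-2)/[c(e^c+2)]>0$, yielding
\[
\Delta(\eps) \;\geq\; ce^{-c}\,\eta_c\cdot\eps\;+\;O(\eps^2).
\]
Choosing $\eps_0$ small enough to absorb the $O(\eps^2)$ error and setting $\delta=\tfrac{1}{2}ce^{-c}\eta_c$ then gives $\Delta(c,\mu,\eps)\geq \delta\eps$ for all $\eps\leq\eps_0$ and $\mu\geq\mu_0$, as required.
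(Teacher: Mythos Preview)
Your reduction to the single-zero-flip case and the decomposition over $k$ are exactly what the paper does, and your closed form for $S$ and the verification that $S<e^c+2$ are correct. The divergence is entirely in how you bound $q_k$.

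The paper does \emph{not} attempt to control the full fixation probability. Instead (Lemma~\ref{lem:Fr}) it looks only at the \emph{first} generation in which an offspring is accepted into $F^\mu(r)=\{x^{(1-r)},(\mu-1)\,x^0\}$. The key observation is a hidden symmetry of the selection step: under the no-zero-flip convention, an offspring is accepted if and only if its parent is the currently fitter of the two types \emph{and} the first flipped bit lies beyond the first position where the two types disagree. These two events are independent, and conditioning on acceptance yields directly that a copy of $x^0$ (rather than $x^{(1-r)}$) is ejected with probability exactly $1/(1+(\mu-1)r)$. Pessimistically assigning drift $1-r$ to that branch gives $\Delta(F^\mu(r),\eps)\geq (1-r)/(1+(\mu-1)r)+O(\eps)$, which in particular implies $q_r\leq 1/(1+(\mu-1)r)\leq 1/\mu$ for $r\geq 2$ --- so the inequality you want is true, but the paper obtains it in one clean step.

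Your gambler's-ruin route has a genuine gap at the spoiler step. For the pure two-type Moran chain you would indeed get $q_k=1/(k+1)^{\mu-1}\leq 1/\mu$, but the actual process is not two-type: every mutation of $x^0$ produces a spoiler $y=x^{(0-m)}$ that is strictly worse than $x^0$ in every permutation, yet can be \emph{better} than $x^{(1-k)}$ only with probability $k/(1+k+m)<k/(k+1)$. Thus in a selection round with such a $y$ present, the lone $x^{(1-k)}$ survives \emph{more} often than it would against a clean clone of $x^0$ --- the opposite of ``spoilers can only harm fixation of $x^{(1-k)}$''. The symmetric effect (mutations of $x^{(1-k)}$ producing weaker descendants) pulls the other way, and in fact the first-step up/down ratio comes out the same as in the clean chain; but after an $x^0$ is ejected you are in a three-type state $\{x^{(1-k)},z,(\mu-2)\,x^0\}$ rather than a clean $i=2$ state, and your sketch does not handle this. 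A rigorous coupling or stochastic-domination argument tracking lineages might close the gap, but it is substantially more work than the paper's one-step symmetry, which sidesteps the whole issue.
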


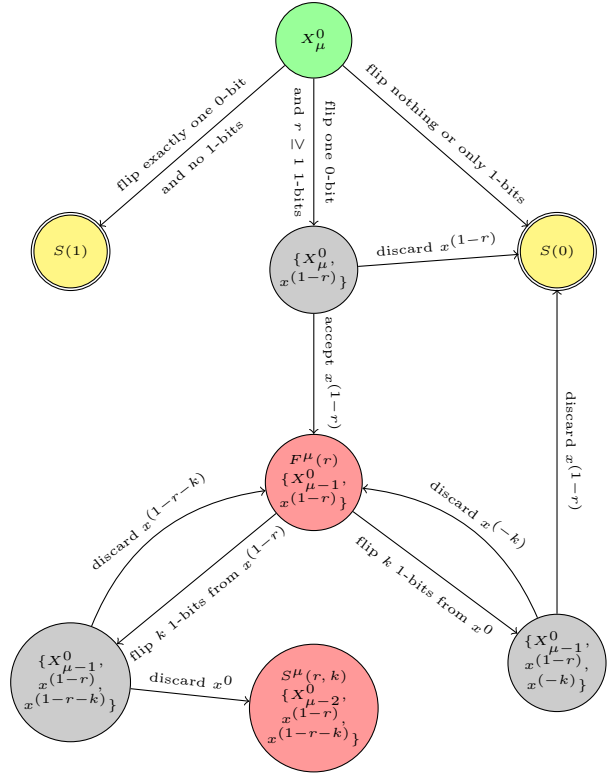
\begin{wrapfigure}{r}{0.6\textwidth}
\begin{tikzpicture}[main node/.style={circle,draw,font=\tiny, minimum size = 1cm, inner sep = 0 cm},scale = 0.8]
  \node[fill=gray!40] at (0,.6) [main node] (C) {$\begin{array}{c}
  \{X_\mu^0,\\x^{(1-r)}\}
  \end{array}$};
  \node[fill=green!40] at (0,4.5) [main node] (A) {$X_\mu^0$};
  \node[fill=yellow!60, double] at (4,1) [main node] (D) {$S(0)$};
  \node[fill=yellow!60, double] at (-4,1) [main node] (B) {$S(1)$};
  \node[fill=red!40] at (0,-3) [main node] (F) {$\begin{array}{c}
  F^\mu(r)\\\{X_{\mu-1}^0,\\x^{(1-r)}\}
  \end{array}$};
  \node[fill=gray!40] at (-4,-6.5) [main node] (E) {$\begin{array}{c}
  \{X_{\mu-1}^0,\\x^{(1-r)},\\x^{(1-r-k)}\}
  \end{array}$};
  \node[fill=gray!40] at (4,-6) [main node] (G) {$\begin{array}{c}
  \{X_{\mu-1}^0,\\x^{(1-r)},\\x^{(-k)}\}
  \end{array}$};
  \node[fill=red!40] at (0,-7) [main node] (H) {$\begin{array}{c}
  S^\mu(r,k)\\\{X_{\mu-2}^0,\\x^{(1-r)},\\x^{(1-r-k)}\}
  \end{array}$};
  
  \draw[->] (A) -- node [sloped, midway,above ] {\tiny flip one $0$-bit} node [sloped, midway,below ] {\tiny and $r \geq 1$ $1$-bits} (C);
  \draw[->] (A) -- node [sloped, midway,above ] {\tiny flip exactly one $0$-bit} node [sloped, midway,below ] {\tiny and no $1$-bits} (B);
  \draw[->] (A) -- node [sloped, midway,above ] {\tiny flip nothing or only $1$-bits} (D);
  \draw[->] (C) -- node [sloped, midway,above ] {\tiny discard $x^{(1-r)}$} (D);
  \draw[->] (C) -- node [sloped, midway,above ] {\tiny accept $x^{(1-r)}$} (F);
  \draw[->] (F) -- node [sloped, midway,below ] {\tiny flip $k$ $1$-bits from $x^{(1-r)}$} (E);
  \draw[->] (E) to[bend left] node [sloped, midway,above] {\tiny discard $x^{(1-r-k)}$} (F);
  \draw[->] (F) -- node [sloped, midway,below ] {{\tiny flip $k$ $1$-bits from $x^{0}$}} (G);
  \draw[->] (G) to[bend right] node [sloped, midway,above] {\tiny discard $x^{(-k)}$} (F);  
  \draw[->] (E) -- node [sloped, midway,above ] {\tiny discard $x^{0}$} (H);
  \draw[->] (G) -- node [sloped, midway,above ] {\tiny discard $x^{(1-r)}$} (D);
\end{tikzpicture}
\caption{State Diagram for the \moea}\label{fig:transitionmoea}
\end{wrapfigure}

Lemma~\ref{lem:domination} allows us to describe the transition from one degenerate population to the next by a relatively simple Markov chain, provided that at most one zero-bit is flipped during the transition. This zero-bit needs to be flipped in order to leave the starting state, so we assume for this chain that no zero-bit is flipped afterwards. This assumptions is justified by Lemma~\ref{lem:singlezerobit}. The Markov chain (or rather, a part of it) is shown in Figure~\ref{fig:transitionmoea}. The starting state, which is a degenerate population, is depicted in green. The yellow states $S(k)$ represent degenerate populations where the number of one-bits is exactly $k$ larger than that of the starting state, so $\Phi^{t+1}-\Phi^t = k$. In later diagrams, we will also see negative values of $k$. We have included intermediate states depicted in gray, in which an offspring has been created, but selection has not yet taken place. In other words, the gray states have $\mu+1$ search points, and it still needs to be decided which of them should be discarded from the population. As we will see in the analysis, it is quite helpful to separate offspring creation from this selection step. The remaining states are depicted in red. Note that we have only drawn part of the Markov chain since from the bottom-most state $S^{\mu}(r,k)$, we have not drawn outgoing arrows or states.

Moreover, the states of the Markov chain do not correspond one-to-one to the generations: we omit intermediate states where Lemma~\ref{lem:domination} allows us to do that. For example, following the first arrow to the left we reach a state in which one individual $x^{(1)}$ (the offspring) dominates all other individuals. By Lemma~\ref{lem:domination}, such a situation must degenerate into $\mu$ copies of of $x^{(1)}$, so we immediately mark this state as a degenerate state with $\Phi^{t+1}-\Phi^t = 1$.

%

The key step will be to give a lower bound for the contribution to the drift from state $F^{\mu}(r)$. Once we have a bound on this, it is straightforward to compute a bound on the degenerate population drift. Before we turn to the computations, we first introduce a bit more notation.
\begin{definition}\label{def:B_i}
Consider the \moea in state $F^{\mu}(r)$ in generation $k-1$. We re-sort the $n$ positions of the search points descendingly according to the next fitness function $f^{k}$. So by the ``first'' position we refer to the position which has highest weight according to $f^k$, and the $j$-th bit of a bitstring $z$ is given by $z_i$ such that $\pi^{k}(i) = j$. Then we define: 
\begin{itemize}
    \item $B_{z}^{k} := \text{position of the first zero-bit in $z$}$;
    \item $B_0^k := \text{position of the first flipped bit in the $k$-th mutation}$;
    \item $z_1^{k} := \argmin\{f^{k}(z) \mid z \in\{X^0_{\mu-1}, X^{(1-r)}_1\}\};$
    \item $z_2^{k} := \argmax\{f^{k}(z) \mid z \in \{X^0_{\mu-1}, X^{(1-r)}_1\}\}$.
\end{itemize} 
In particular, the search point to be discarded in generation $k$ is either $z_1^{k}$ or it is the offspring generated by the $k$-th mutation. We define $B_0^k$ to be $\infty$ if no bits are flipped in the $k$-th mutation.
\end{definition}

Now we are ready to bound the drift of state $F^{\mu}(r)$. We remark that the statement for $\mu=2$ was also proven in~\cite{lengler2020fullversion}, but the proof there was much longer and more involved, since it did not make use of the hidden symmetry of the selection process that we will use below.

\begin{lemma}\label{lem:Fr}
Consider the \moea on the \dynbv function in the state $F^{\mu}(r)$ for some $r\geq 1$, and let $\eps >0$. Then the drift from $F^{\mu}(r)$ is
\[
\Delta(F^{\mu}(r),\eps) \geq \frac{1-r}{1+(\mu-1) \cdot r} + \mathcal{O}(\epsilon).
\] 
For $\mu=2$ the bound holds with equality, i.e. $\Delta(F^{2}(r),\eps) = (1-r)/(1+ r) + \mathcal{O}(\epsilon).$
\end{lemma}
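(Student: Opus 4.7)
The approach is a first-step analysis on the Markov chain depicted in Figure~\ref{fig:transitionmoea}, resting on a symmetry identity between the one-step probabilities of exiting $F^\mu(r)$ toward $S(0)$ (degeneration at $x^0$) and toward some $S^\mu(r,k)$ (the population in which $x^{(1-r)}$ still survives).

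First, by Lemma~\ref{lem:singlezerobit}(b) applied with $S=F^\mu(r)$, we may condition on the event that no further zero-bit of any parent is flipped on the path from $F^\mu(r)$ to the next degenerate population, at the cost of an additive $O(\eps)$ error in $\Delta(F^\mu(r),\eps)$. Under this conditioning every generated offspring is either $x^{(-k)}$ (from a parent $x^0$ flipping $k$ of its one-bits) or $x^{(1-r-k)}$ (from a parent $x^{(1-r)}$ flipping $k$ of its one-bits distinct from $p_0$) for some $k\geq 0$; the residual case in which $p_0$ or one of the $O(1)$ positions $p_1,\dots,p_r$ is hit has probability $O(1/n)$ per mutation, and by Lemma~\ref{lem:tailbound} contributes $O(1/n)=o(\eps)$ in total. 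Since $x^0$ and $x^{(1-r)}$ are the only non-dominated members of every reachable population, Lemma~\ref{lem:domination} then implies that the next degenerate population consists of copies of exactly one of them. Writing $p_\mu(r)$ for the probability that the winner is $x^{(1-r)}$, we obtain $\Delta(F^\mu(r),\eps)=(1-r)\,p_\mu(r)+O(\eps)$, and the lemma reduces to the bound $p_\mu(r)\leq 1/(1+(\mu-1)r)+O(\eps)$ (together with equality when $\mu=2$), which via $1-r\leq 0$ turns into the claimed drift lower bound.

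To bound $p_\mu(r)$ I use first-step analysis on Figure~\ref{fig:transitionmoea}. The only one-step transitions out of $F^\mu(r)$ that change the state go either to $S(0)$, with total probability $P_D=\sum_{k\geq 0}P_{D^k}$, or to some $S^\mu(r,k)$, with total probability $P_S=\sum_{k\geq 0}P_{S^k}$. Here $P_{D^k}$ arises from choosing parent $x^0$ (probability $(\mu-1)/\mu$), flipping $k$ of its one-bits (probability $\Pr[k]$, asymptotically $\mathrm{Poi}(c)$), and then having $x^{(1-r)}$ discarded; while $P_{S^k}$ arises from parent $x^{(1-r)}$ (probability $1/\mu$), flipping $k$ of its one-bits distinct from $p_0$ (same distribution $\Pr[k]$ up to $O(1/n)$), and then having some $x^0$ discarded. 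In both intermediate populations, the three distinct search points $x^0$, $x^{(1-r)}$, and the offspring disagree on exactly $1+r+k$ positions -- namely $\{p_0,p_1,\dots,p_r\}$ together with the $k$ newly flipped bits -- and these positions receive a uniformly random relative ordering under the random permutation of $\mathrm{DynBV}$. An elementary case analysis then gives that $x^{(1-r)}$ is worst with probability $r/(1+r+k)$ (the pivotal highest-weight distinguishing position lies in $\{p_1,\dots,p_r\}$), and $x^0$ is worst with probability $1/(1+r+k)$ (the pivot is $p_0$); note that in each case the third search point is dominated by one of the other two and hence can never itself be the worst, so the selection really is determined by this single comparison. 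Multiplying the three matching factors yields the symmetry identity
\[
\frac{P_{D^k}}{P_{S^k}}=(\mu-1)\cdot r\qquad\text{for every }k\geq 0,
\]
hence $P_D/P_S=(\mu-1)r$. Conditional on leaving $F^\mu(r)$, the process therefore hits $S(0)$ with probability $(\mu-1)r/(1+(\mu-1)r)$ and some $S^\mu(r,k)$ with probability $1/(1+(\mu-1)r)$; since $S(0)$ contributes $0$ and each $S^\mu(r,k)$ contributes at most $1$ to $p_\mu(r)$, this gives $p_\mu(r)\leq 1/(1+(\mu-1)r)+O(\eps)$. For $\mu=2$, each state $S^2(r,k)=\{x^{(1-r)},x^{(1-r-k)}\}$ consists of $x^{(1-r)}$ together with a strictly dominated copy, so Lemma~\ref{lem:domination} gives $p_{S^2(r,k)}=1$, the upper bound is attained, and equality holds.

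The main obstacle is the careful bookkeeping that justifies the identity $P_{D^k}/P_{S^k}=(\mu-1)r$ cleanly: one has to verify that the probability of the mutation flipping $p_0$ or any bit in the $O(1)$-size set $\{p_1,\dots,p_r\}$ is indeed $O(1/n)$ per mutation and hence gets absorbed into the $O(\eps)$ error, that $\Pr[k]$ is genuinely the same (up to negligible corrections) on both sides despite the parents $x^0$ and $x^{(1-r)}$ having slightly different numbers of one-bits, and that the three-way selection event "$x^{(1-r)}$ is worst" (resp.\ "$x^0$ is worst") really coincides with the single pairwise comparison governed by the top-weight position among the $1+r+k$ distinguishing positions. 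Once these identifications are in place, the symmetry between the factor $(\mu-1)$ from parent selection and the factor $r$ from DynBV-selection falls out transparently, which is the hidden symmetry absent from the longer proof in~\cite{lengler2020fullversion}.
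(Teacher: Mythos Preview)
Your argument is correct and follows essentially the same route as the paper: condition away further zero-bit flips via Lemma~\ref{lem:singlezerobit}, use Lemma~\ref{lem:domination} to reduce to the dichotomy ``degenerate to $x^0$ or to $x^{(1-r)}$'', bound the drift from $S^\mu(r,k)$ pessimistically by $1-r$ (with equality for $\mu=2$), and then compute the conditional exit probability $1/(1+(\mu-1)r)$. The only difference is how that probability is extracted: the paper phrases the symmetry as independence of $\{f^k(x^{(1-r)})\ge f^k(x^0)\}$ from $\{B_0^k>B_{\min}^k\}$ and then cancels the latter, whereas you compute $P_{D^k}$ and $P_{S^k}$ explicitly and observe the constant ratio $P_{D^k}/P_{S^k}=(\mu-1)r$; these are two presentations of the same cancellation. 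One wording slip to fix: in ``the third search point is dominated by one of the other two and hence can never itself be the worst'', you mean that the \emph{parent} dominates the offspring and therefore the parent can never be worst (the offspring certainly can be); with that correction the reduction to a single pairwise comparison is exactly right.
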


\begin{proof}
Let us assume that the algorithm will not flip an additional zero-bit through mutation before it reaches the next degenerate population. In fact, the contribution to the drift in case it does flip another zero-bit can be summarized by $\mathcal{O}(\epsilon)$ due to Equation~\eqref{eq:contribution2} in Lemma~\ref{lem:singlezerobit}. So from now on, we assume that the algorithm doesn't flip an additional zero-bit until it reaches the next degenerate population.

The idea is to follow the Markov chain as shown in Figure~\ref{fig:transitionmoea}. We will compute the conditional probabilities of reaching different states from $F^{\mu}(r)$, conditional on actually leaving $F^{\mu}(r)$. More precisely, we will condition on the event that an offspring $\Bar{x}$ is generated and accepted into the population. 

Recall that $F^{\mu}(r)$ corresponds to the population of $\{X^0_{\mu-1}, X^{(1-r)}_1\}$, i.e., $\mu-1$ copies of $x^0$ and one copy of $x^{(1-r)}$. So if the offspring is accepted, one of these search points must be ejected from the population. Let us first consider the case that $x^{(1-r)}$ is ejected from the population. Then the population is dominated by $x^0$ afterwards, and will degenerate into $X_\mu^{0}$ again by Lemma~\ref{lem:domination}. The other case is that one of the $x^0$ individuals is ejected, which is described by state $S^{\mu}(r,k)$. It is complicated to compute the contribution of this state precisely, but by Lemma~\ref{lem:domination} we know that this population will degenerate either to copies of $x^0$ or of $x^{(1-r)}$. For $\mu =2$, only the second case is possible, since there are no copies of $x^0$ left in~$S^{\mu}(r,k)$. In any case, we can use the pessimistic bound $\Delta(S^{\mu}(r,k),\eps) \geq (1-r)$ for the drift of $S^{\mu}(r,k)$, with equality for $\mu =2$.\footnote{The notation is slightly imprecise here, since we condition on the event that no further zero-bit is flipped, which is not reflected in the notation. But as argued above, this only adds an additive $O(\eps)$ error term to the final result.} Summarizing, once a new offspring is accepted, if a copy of $x^0$ is discarded we get a drift of at most $1-r$ and if $x^{(1-r)}$ is discarded we get a drift of $0$. It only remains to compute the conditional probabilities with which these cases occur.


To compute the probabilities is not straightforward, but we can use a rather surprising symmetry, using the terminology from Definition~\ref{def:B_i}. Assume that the algorithm is in iteration $k$. We make the following observation: an offspring is accepted if and only if it is mutated from $z_2^{k}$ and $B_0^k > B_{\min}^k:=\min\{B_{x^0}^k,B_{x^{(1-r)}}^k\}$.
Hence, we need to compute the probability
\[
\hat{p} := \Pr\big(f^k(x^{(1-r)}) \geq f^k(x^0) \mid \text{\{mutated $z_2^{k}$\}} \wedge \{B_0^k > B_{\min}^k\} \big),
\]
since then we can bound $\DeltaEA(F^{\mu}(r),\eps) \geq (1-r)\hat{p} + \mathcal{O}(\epsilon)$ by Lemma~\ref{lem:singlezerobit}. For $\mu=2$, this lower bound is an equality. 

%
Clearly, the events $\{f^k(x^{(1-r)}) \geq f^k(x^0)\}$ and $\{B_0 > B_{\min}^k\}$ are independent. We emphasize that this is a rather subtle symmetry of the selection process. Using conditional probability, $\hat{p}$ simplifies to:
\begin{align}\label{eq:hatp}
\hat{p} = \frac{\Pr\big(f(x^{(1-r)}) \geq f(x^0) \wedge \text{\{mutated $z_2^{k}$\}}\big)}{\Pr\big(\text{\{mutated $z_2^{k}$\}} \big)}. 
\end{align}
To compute the remaining probabilities, we remind the reader that $x^{(1-r)}$ has exactly $r$ more zero-bits and 1 more one-bit, than $x^{0}$. Hence, in order to compare them, we only need to look at the relative positions of these $r+1$ bits in which they differ. In particular, $x^{(1-r)} = z_2^{k}$ holds if and only if the permutation $\pi^{k}$ places the one-bit from $x^{(r-1)}$ before the $r$ one-bits of $x^{0}$, and this happens with probability $1/(r+1)$. Moreover, recall that there are $\mu-1$ copies of $x^0$ and only one $x^{(1-r)}$, so the probability of picking them as parents is $(\mu-1)/\mu$ and $1/\mu$, respectively. Therefore, by using the law of total probability,
\begin{align*}
    \Pr\big(\text{\{mutated $z_2^{k}$\}} \big) & = \Pr\big(\text{\{mutated $z_2^{k}$\}} \mid x^{(1-r)} = z_2^{k}\big) \cdot \Pr\big(x^{(1-r)} = z_2^{k}\big) \\
    & \quad + \Pr\big(\text{\{mutated $z_2^{k}$\}} \mid x^{0} = z_2^{k}\big) \cdot \Pr\big(x^{0} = z_2^{k}\big)\\
    & = \frac{1}{\mu} \cdot \frac{1}{r+1} + \frac{\mu -1}{\mu} \cdot \frac{r}{r+1}
\end{align*}
Plugging this into~\eqref{eq:hatp} yields
\[\hat{p} = \left(\frac{1}{r+1} \cdot \frac{1}{\mu}\right)\bigg/\left(\frac{1}{\mu} \cdot \frac{1}{r+1} + \frac{\mu -1}{\mu} \cdot \frac{r}{r+1}\right) = \frac{1}{1+(\mu-1) r}.\]
Together with Lemma~\ref{lem:singlezerobit} and the lower bound $\Delta(F^{\mu}(r),\eps) \geq (1-r)\hat p + O(\eps)$, this concludes the proof. For $\mu =2$, the lower bound is an equality. \qed
\end{proof}  
%
Now we are ready to bound the degenerate population drift and prove Theorem~\ref{thm:DeltaEA}. 
\begin{proof}[of Theorem~\ref{thm:DeltaEA}]
To prove this theorem, we refer to Figure~\ref{fig:transitionmoea}. By Lemma~\ref{lem:singlezerobit}, the contribution of all states that involve flipping more than one zero-bit is $O(\eps^2)$. If we flip no zero-bits at all, then the population degenerates to $X_\mu^0$ again, which contributes zero to the drift. So we only need to consider the case where we flip exactly one zero-bit in the transition from the $t$-th to the $(t+1)$-st degenerate population. This zero-bit needs to be flipped in the first mutation, since otherwise the population does not change. We denote by $p_r$ the probability to flip exactly one zero-bit and $r$ one-bits in $x^0$, thus obtaining $x^{(1-r)}$. If $f^k(x^{(1-r)}) > f^k(x^0)$ then $x^{(1-r)}$ is accepted into the population and we reach state $F^{\mu}(r)$. This happens if and only if among the $r+1$ bits in which $x^{(1-r)}$ and $x^{0}$ differ, the zero-bit of $x^0$ is the most relevant one. So $\Pr[f^k(x^{(1-r)}) > f^k(x^0)] = 1/(r+1)$ Finally, by Lemma~\ref{lem:Fr}, the drift from $F^{\mu}(r)$ is at least $-(r-1)/(1+(\mu-1) r) + \mathcal{O}(\epsilon)$. Summarizing all this into a single formula, we obtain
\begin{align}\label{eq:DeltaEA1}
    \DeltaEA(\eps) & \geq \mathcal{O}(\epsilon^2) + p_0 +  \sum_{r=1}^{(1-\epsilon)  n} p_r \cdot \left[\Pr[f^k(x^{(1-r)}) >f^k(x^0)] \cdot \DeltaEA(F^{\mu}(r),\eps)\right] \nonumber\\
& \geq \mathcal{O}(\epsilon^2) + p_0 -  \sum_{r=1}^{(1-\epsilon) n} p_r \cdot \frac{1}{r+1}\cdot \left(\frac{r-1}{1+(\mu -1) r} + \mathcal O(\eps)\right).
\end{align}
For $p_r$, we use the following standard estimate, which holds for all $r = o(n)$. 
\[
p_r = (1 + o(1)) \cdot c^{r+1}/r! \cdot e^{-c} \cdot \epsilon \cdot (1-\epsilon)^r.
\]
The summands for $r = \Omega(n)$ (or $r = \omega(1)$, actually) in \eqref{eq:DeltaEA1} are negligible since $p_r$ decays exponentially in $r$. Plugging $p_0$ and $p_r$ into~\eqref{eq:DeltaEA1}, we obtain
\begin{align}\label{eq:DeltaEA2}
    \DeltaEA(\eps) &\geq \mathcal{O}(\epsilon^2) + (1 + o(1))  \epsilon  c  e^{-c}  \bigg[ 1 - \sum_{r=1}^{(1-\epsilon) n} \underbrace{\frac{c^r}{(r+1)!} \cdot \frac{(1-\epsilon)^r  (r-1)}{ (1+(\mu - 1)  r)}}_{=: f(r,c,\mu)}\bigg].
\end{align} 
To bound the inner sum, we use $(r-1)/(r+1) \leq 1$ and obtain
\begin{align*}
    f(r,c,\mu) &\leq \frac{c^r}{(r+1)!} \cdot \frac{r-1}{ (1+(\mu - 1) \cdot r)} \leq \frac{c^r}{r!}  \cdot \frac{1}{1+(\mu-1) \cdot r} \leq  \frac{c^{r}}{r!} \cdot \frac{1}{\mu-1} .
\end{align*} 
We plug this bound into~\eqref{eq:DeltaEA2}. Moreover, summing to $\infty$ instead of $(1-\eps)n$ only makes the expression in~\eqref{eq:DeltaEA2} smaller, and allows us to use the identity $\sum_{r=1}^\infty c^r/r! = e^{c}-1 \leq e^c$, yielding  
\begin{align*}
    \DeltaEA(\eps) & \geq \mathcal{O}(\epsilon^2) + (1 + o(1))  \epsilon  c  e^{-c}  \Big(1 - \frac{e^{c}}{\mu - 1}\Big).
\end{align*}
If $n$ is large enough and $\eps$ so small that the $\mathcal{O}(\epsilon^2)$ term can be neglected, then by picking $\mu_0 = 2 + e^{c}$ we get $\DeltaEA(\eps) \gtrsim \epsilon c e^{-c} /(e^c + 1) >0$ and therefore we can set for example $\delta = \tfrac12 c e^{-c}/(e^c + 1)$, which concludes the proof.\qed
\end{proof}

\section{Runtime of the \moea close to the optimum}

In the previous sections, we have shown that the \moea have positive drift close to the optimum if the population size is chosen accordingly. In this section, we explain briefly what this result implies for the runtime of these algorithms.  
\begin{theorem}\label{thm:EA}
Assume that the \moea runs on the \dynbv function with constant parameters $c>0$ and $\mu \geq e^c+2$. Let $\eps_0$ be as in Theorem~\ref{thm:DeltaEA} and let $\eps < \eps_0$. If the \moea is started with a population in which all individuals have at most $\eps n$ zero-bits, then whp it finds the optimum in $\mathcal{O}(n \log n)$ steps.
\end{theorem}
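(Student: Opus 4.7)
The plan is to apply the multiplicative drift theorem to the sequence $(\Phi^t)_{t\geq 0}$ of zero-bit counts in the successive degenerate populations (see Equation~\eqref{eq:def_Phit}). Theorem~\ref{thm:DeltaEA} already provides, for all $\eps \leq \eps_0$ and $n$ large enough, the inequality $\E[\Phi^t - \Phi^{t+1} \mid \Phi^t = \lfloor \eps n \rfloor] \geq \delta \eps = \delta \Phi^t/n$, which is multiplicative drift towards $0$ with rate $\delta/n$ as long as $\Phi^t \leq \eps_0 n$.

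First, I would argue confinement: that $\Phi^t$ stays strictly below $\eps_0 n$ until it reaches $0$, whp. The increment $|\Phi^{t+1} - \Phi^t|$ is dominated by the total number of bit flips between the two degenerate populations. By Lemma~\ref{lem:tailbound} the number of generations separating them has exponential tails, and since each generation flips $\mathrm{Bin}(n,c/n)$ bits, $|\Phi^{t+1} - \Phi^t|$ itself has sub-exponential tails uniformly in the current state. Combined with the strictly negative drift $-\Omega(1)$ in the band $\Phi^t \in [\eps n, \eps_0 n]$ (which still follows from Theorem~\ref{thm:DeltaEA}), the negative drift theorem of Oliveto and Witt (or a direct supermartingale argument with exponential moments) implies that the first hitting time of $\{\Phi^t > \eps_0 n\}$ starting from $\Phi^0 \leq \eps n$ is at least $e^{\Omega(n)}$ whp, which dominates any polynomial runtime we wish to establish.

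Next, conditional on this confinement, the standard multiplicative drift theorem (in its exponential-tail version, e.g.\ Doerr--Goldberg) applied to $(\Phi^t)$ yields that the number $T^*$ of degenerate-population transitions needed to reach $\Phi = 0$ from $\Phi^0 \leq \eps n$ is at most $O((n/\delta) \log(\eps n)) = O(n \log n)$ whp. Finally, I would translate this into generations using Lemma~\ref{lem:tailbound} once more: each duration $K_t$ between the $t$-th and $(t+1)$-st degenerate population satisfies $\Pr[K_t \geq \mu k] \leq e^{-a k}$ uniformly in the current population, so in particular $\E[K_t] = O(1)$ with geometric-type tails. A standard Chernoff/Bernstein-type concentration bound for sums of independent sub-exponential variables (or a simple union bound with Lemma~\ref{lem:tailbound} over $t \leq T^*$) then gives $\sum_{t=0}^{T^*-1} K_t = O(n\log n)$ whp, which is the claimed runtime.

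The main obstacle is the confinement step, since Theorem~\ref{thm:DeltaEA} is purely a local statement in the $\eps_0$-neighborhood of the optimum: one must rule out excursions above $\eps_0 n$ before the optimum is found, and one cannot invoke the drift bound outside this band. The exponential-tail estimate of Lemma~\ref{lem:tailbound} on the duration between degenerate populations, together with the boundedness of standard bit mutation, is exactly what is needed to turn the local drift bound into a global negative-drift estimate and to close the argument.
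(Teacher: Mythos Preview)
Your proposal is correct and follows essentially the same argument as the paper's own proof: exponential tail bounds on $|\Phi^{t+1}-\Phi^t|$ via Lemma~\ref{lem:tailbound}, confinement below $\eps_0 n$ via the negative drift theorem, then the multiplicative drift theorem for an $O(n\log n)$ bound on the number of degenerate-population transitions, and finally Lemma~\ref{lem:tailbound} again to convert to generations. The paper presents exactly this chain of reasoning as a sketch, so your write-up is in fact more detailed than the original in several places.
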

\begin{proof}
We only sketch the proof since the argument is mostly standard, e.g.~\cite{lengler2019general}. First we note that the number of generations between two degenerate populations satisfies a exponential tail bound by Lemma~\ref{lem:tailbound}. As an easy consequence, the total number of flipped bits between two degenerate populations also satisfies an exponential tail bound, and so does the difference $|\Phi^t-\Phi^{t+1}|$. This allows us to conclude from the \emph{negative drift theorem}~\cite{doerr2013adaptive,lengler2020drift} that whp $\Phi^t < \eps_0n$ for an exponential number of steps. However, in the range $\Phi^t \in [0,\eps_0 n]$, by Theorem~\ref{thm:DeltaEA} the drift is positive and multiplicative, $\E[\Phi^t-\Phi^{t+1} \mid \Phi^t] \geq \delta \Phi^t/n$. Therefore, by the \emph{multiplicative drift theorem} \cite{doerr2012multiplicative,lengler2020drift} whp the optimum appears among the first $O(n \log n)$ degenerate populations. Again by Lemma~\ref{lem:tailbound}, whp this corresponds to $O(n\log n)$ generations.\qed
\end{proof}

\section{Second-Order Analysis of the Drift for $\mu =2$}\label{sec:second_order}
In this section we investigate the \toea. We will compute a second order approximation of $\EX[\Phi^t -\Phi^{t+1} \mid \Phi^t = \epsilon n]$, that is we will compute the drift up to $\mathcal O(\epsilon^3)$ error terms. This analysis will allow us to prove the following main result.
\begin{theorem}\label{thm:hardestregion}
There are $C>0$, $c^* > 0$ and $\eps^*>0$ such that the \toea with mutation parameter $c^*$ has positive drift $\Delta(c^*,\eps) \geq C$ for all $\eps \in (0,\tfrac12 \eps^*)$ and has negative drift $\Delta(c^*,\eps) \leq  -C$ for all $\eps \in (\tfrac32 \eps^*,2 \eps^*)$.
\end{theorem}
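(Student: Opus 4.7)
The plan is to rely on the second-order expansion
\[
\Delta(c,\eps) \;=\; a(c)\,\eps \;+\; b(c)\,\eps^2 \;+\; O(\eps^3),
\]
which the rest of Section~\ref{sec:second_order} is devoted to establishing, and then to pick a mutation parameter $c^*$ for which $a(c^*)>0$ but $b(c^*)<0$. Once such a $c^*$ is found, setting $\eps^*:=-a(c^*)/b(c^*)>0$ makes the sign change of $\Delta$ purely algebraic: for $\eps\in(0,\eps^*/2)$ one checks $\Delta(c^*,\eps)\geq\tfrac12 a(c^*)\eps-O(\eps^3)>0$, whereas for $\eps\in(3\eps^*/2,2\eps^*)$ the quadratic term dominates and $\Delta(c^*,\eps)\leq-\tfrac34 a(c^*)\eps^*+O((\eps^*)^3)<0$. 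The constant $C$ in the statement is then any common lower bound for these two magnitudes on the relevant ranges (with $C$ essentially proportional to $a(c^*)\eps^*$), interpreting the lower bound on the first interval multiplicatively in $\eps$ or insisting that $\eps$ be bounded away from $0$, since $\Delta\to 0$ as $\eps\downarrow 0$.

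The real work is the computation of $a(c)$ and $b(c)$ for $\mu=2$. The linear coefficient is essentially already present in the proof of Theorem~\ref{thm:DeltaEA}: using the sharp equality $\Delta(F^2(r),\eps) = (1-r)/(1+r) + O(\eps)$ from Lemma~\ref{lem:Fr}, one obtains
\[
a(c) \;=\; c e^{-c}\Bigl(1 - \sum_{r=1}^{\infty}\frac{c^{r}}{(r+1)!}\cdot\frac{r-1}{r+1}\Bigr),
\]
which is positive on $(0,c^\dagger)$ for some threshold $c^\dagger>0$. To extract $b(c)$ we have to refine each approximation that Section~3 absorbed into $O(\eps^2)$ or $O(\eps)$: (i) sharpen Lemma~\ref{lem:Fr} by following the dynamics from $F^2(r)$ up to the next degeneration in the cases where a further zero-bit is flipped, thereby turning the $O(\eps)$ error in $\Delta(F^2(r),\eps)$ into an explicit linear-in-$\eps$ correction; (ii) compute precisely the $\eps^2$ contribution of the two-zero-bit branches that Lemma~\ref{lem:singlezerobit}(a) only bounded by $O(\eps^2)$; and (iii) refine the estimate $p_r = c^{r+1}e^{-c}\eps(1-\eps)^r/r!$ by expanding $(1-\eps)^r = 1 - r\eps + O(\eps^2)$, so that the $-r\eps^2$ correction of each $p_r$ contributes to $b(c)$. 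Summing these refined contributions over all admissible $r$ produces a closed-form formula for $b(c)$.

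With explicit expressions for $a(c)$ and $b(c)$ in hand, identifying $c^*$ amounts to finding $c^*<c^\dagger$ at which $b(c^*)<0$; the experimental evidence of~\cite{lengler2020large,lengler2020fullversion} that the drift is already negative in an intermediate region for some values of $c$ strictly below the efficiency threshold essentially guarantees such a $c^*$, and the explicit formula for $b(c)$ should allow one to pin one down either by direct inspection or by a continuity argument (e.g.\ showing $b(c)\to-\infty$ as $c\uparrow c^\dagger$ while $a(c)$ remains bounded away from zero just below $c^\dagger$). The main obstacle will be steps~(i) and~(ii): once a second zero-bit is flipped during the transition between two degenerate populations, the process can visit populations outside the restricted Markov chain shown in Figure~\ref{fig:transitionmoea}, and enumerating all such trajectories with their precise probabilities and drift contributions (rather than merely the $O(\eps^2)$ bound used in Section~3) is the combinatorial heart of Section~\ref{sec:second_order}.
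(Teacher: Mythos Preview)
Your plan coincides with the paper's: derive the second-order expansion $\Delta(c,\eps)=\eps f_0(c)+\eps^2 f_1(c)+O(\eps^3)$, find $c^*$ with $f_0(c^*)>0$ and $f_1(c^*)<0$, set $\eps^*:=-f_0(c^*)/f_1(c^*)$, and read off the signs on the two subintervals. Your breakdown of the second-order work into (i) the additional-zero-bit refinement of $\Delta(F^2(r),\eps)$ and (ii) the two-zero-bit branch is exactly how Section~\ref{sec:second_order} is organized: the paper carries this out via the enlarged state diagram of Figure~\ref{fig:transitiontoea}, computes the drifts $\Delta_A,\Delta_B,\Delta_5,\Delta_6,\ldots$ from the new (red) states by reapplying Lemma~\ref{lem:Fr}, and then solves a linear fixed-point equation for $\Delta_F$.

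One correction on locating $c^*$: your parenthetical ``$b(c)\to-\infty$ as $c\uparrow c^\dagger$ while $a(c)$ remains bounded away from zero just below $c^\dagger$'' cannot work, since $b$ is analytic in $c$ and does not blow up at any finite point, and $a(c^\dagger)=0$ by the very definition of $c^\dagger$. The paper takes your other suggested route, a continuity argument: it shows $f_0$ has a unique root $c_0$ (numerically $c_0\approx 2.493$), verifies numerically that $f_1(c_0)\approx -0.485<0$, and then picks $c^*$ slightly below $c_0$ by continuity of $f_0$ and $f_1$. Taking $c^*$ close to $c_0$ also makes $\eps^*$ arbitrarily small, which is what allows the $O((\eps^*)^3)$ error to be dominated in the final inequality. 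Your caveat about the constant $C$ on $(0,\tfrac12\eps^*)$ is well taken: the paper in fact establishes a uniform positive lower bound for $\Delta^*:=\Delta/\eps$ rather than for $\Delta$ itself.
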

In a nutshell, Theorem~\ref{thm:hardestregion} shows that the hardest part for optimization is not around the optimum. In other words, it shows that the range of efficient parameters settings is larger close to the optimum. We remark that we ``only'' state the result for one concrete parameter $c^*$, but the same argument could be extended to show that the ``range of efficient parameter settings'' becomes larger. Moreover, with standard arguments similar to Theorem~\ref{thm:EA}, which we omit here, it would be possible to translate positive and negative drift into optimization times. I.e., one could show that whp the algorithm has optimization time $O(n\log n)$ if the algorithm is started in the range $\eps \in (0,\eps^*/4)$, but that the optimization time is exponential if it is started in the range $\eps > 2\eps^*$. For the latter, a small interval $(\tfrac32 \eps^*,2 \eps^*)$ of negative drift suffices, since the exponential tail bounds on the step size $|\Phi^t-\Phi^{t+1}|$ are \emph{not} restricted to that interval. They hold uniformly over the whole search space.

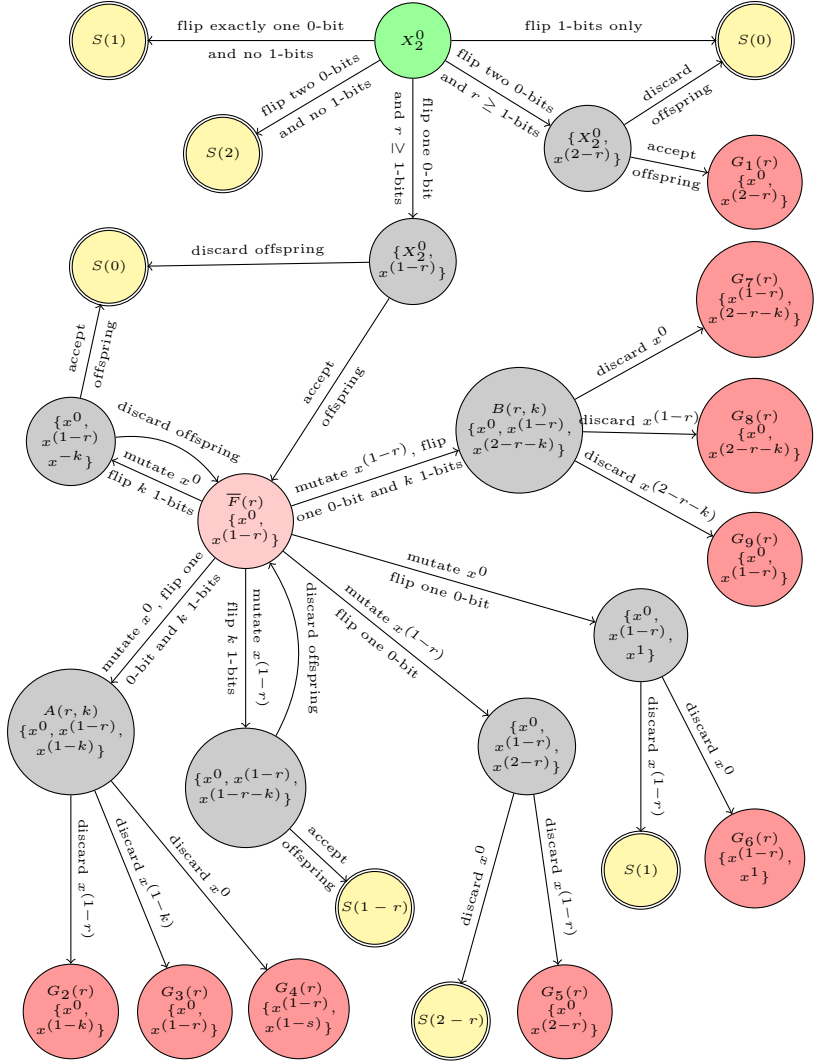
\begin{figure}[h!]
\begin{tikzpicture}[main node/.style={circle,draw,font=\tiny, minimum size = 1cm, inner sep = 0 cm},scale=1]
  
  \node[fill=yellow!40,double] at (-4,12) [main node] (A) {$S(1)$};
  \node[fill=green!40] at (0,12) [main node] (B) {$X_2^0$};
  \node[fill=yellow!40,double] at (4.5,12) [main node] (C) {$S(0)$};
  \node[fill=yellow!40,double] at (-2.5,10.5) [main node] (D) {$S(2)$};
  \node[fill=gray!40] at (2.3,10.5) [main node] (E) {$\begin{array}{c} \{X_{2}^0,\\x^{(2-r)}\} \end{array}$};
  \node[fill=yellow!40, double] at (-4,9) [main node] (F) {$S(0)$};
  \node[fill=red!40] at (4.5,10) [main node] (G) {$\begin{array}{c} G_1(r)\\\{x^0,\\x^{(2-r)}\} \end{array}$};
  \node[fill=gray!40] at (0,9) [main node] (H) {$\begin{array}{c} \{X_2^0,\\x^{(1-r)}\} \end{array}$};
  \node[fill=gray!40] at (-4.5,6.6) [main node] (I) {$\begin{array}{c} \{x^0,\\x^{(1-r)}\\x^{-k}\} \end{array}$};
  \node[fill=red!20] at (-2.2,5.5) [main node] (J) {$\begin{array}{c} \overline F(r)\\\{x^0,\\ x^{(1-r)}\} \end{array}$};
  \node[fill=gray!40] at (1.4,6.5) [main node] (K) {$\begin{array}{c} B(r,k)\\\{x^0,x^{(1-r)},\\x^{(2-r-k)}\} \end{array}$};
  \node[fill=red!40] at (4.5,8.3) [main node] (L) {$\begin{array}{c} G_7(r)\\\{x^{(1-r)},\\x^{(2-r-k)}\} \end{array}$};
  \node[fill=red!40] at (4.5,6.5) [main node] (M) {$\begin{array}{c} G_8(r)\\\{x^0,\\x^{(2-r-k)}\} \end{array}$};
  \node[fill=red!40] at (4.5,5) [main node] (N) {$\begin{array}{c} G_9(r)\\\{x^0,\\ x^{(1-r)}\} \end{array}$};
  \node[fill=gray!40] at (-4.5,2.5) [main node] (O) {$\begin{array}{c} A(r,k)\\\{x^0,x^{(1-r)},\\x^{(1-k)}\} \end{array}$};
  \node[fill=gray!40] at (-2.2,1.8) [main node] (P) {$\begin{array}{c} \{x^0,x^{(1-r)},\\x^{(1-r-k)}\} \end{array}$};
  \node[fill=gray!40] at (3,4) [main node] (Q) {$\begin{array}{c} \{x^0,\\x^{(1-r)},\\x^{1}\} \end{array}$};
  \node[fill=gray!40] at (1.5,2.5) [main node] (R) {$\begin{array}{c} \{x^0,\\x^{(1-r)},\\x^{(2-r)}\} \end{array}$};
  \node[fill=yellow!40,double] at (-0.5,0.5) [main node] (S) {$S(1-r)$};
  \node[fill=red!40] at (4.5,1) [main node] (T) {$\begin{array}{c} G_6(r)\\\{x^{(1-r)},\\ x^{1}\} \end{array}$};
  \node[fill=red!40] at (-4.5,-1) [main node] (U) {$\begin{array}{c} G_2(r)\\\{x^0,\\x^{(1-k)}\} \end{array}$};
  \node[fill=red!40] at (-3,-1) [main node] (V) {$\begin{array}{c} G_3(r)\\\{x^0,\\x^{(1-r)}\} \end{array}$};
  \node[fill=red!40] at (-1.5,-1) [main node] (W) {$\begin{array}{c} G_4(r)\\\{x^{(1-r)},\\ x^{(1-s)}\} \end{array}$};
  \node[fill=yellow!40,double] at (0.5,-1) [main node] (X) {$S(2-r)$};
  \node[fill=red!40] at (2,-1) [main node] (Y) {$\begin{array}{c} G_5(r)\\\{x^0,\\x^{(2-r)}\} \end{array}$};
  \node[fill=yellow!40,double] at (3,1) [main node] (Z) {$S(1)$};
  
  \draw[->] (B) to node [sloped, midway,above ] {\tiny flip exactly one $0$-bit} node [sloped, midway,below ] {\tiny and no $1$-bits} (A);
  \draw[->] (B) to node [sloped, midway,above ] {\tiny flip $1$-bits only} (C);
  \draw[->] (B) to node [sloped, midway,above ] {\tiny flip two $0$-bits} node [sloped, midway,below ] {\tiny and no $1$-bits} (D);
  \draw[->] (B) to node [sloped, midway,above ] {\tiny flip two $0$-bits} node [sloped, midway,below ] {\tiny and $r \ge 1$-bits} (E);
  \draw[->] (E) to node [sloped, midway,above ] {\tiny discard} node [sloped, midway,below ] {\tiny offspring} (C);
  \draw[->] (E) to node [sloped, midway,above ] {\tiny accept} node [sloped, midway,below ] {\tiny offspring} (G);
  \draw[->] (B) to node [sloped, midway,above ] {\tiny flip one $0$-bit} node [sloped, midway,below ] {\tiny and $r \ge 1$-bits} (H);
  \draw[->] (H) to node [sloped, midway,above ] {\tiny discard offspring} (F);
  \draw[->] (H) to node [sloped, midway,above ] {\tiny accept} node [sloped, midway,below ] {\tiny offspring} (J);
  \draw[->] (I) to[bend left] node [sloped, midway,above ] {\tiny discard offspring} (J);
  \draw[->] (J) to node [sloped, midway,above ] {\tiny mutate $x^0$} node [sloped, midway,below ] {\tiny flip $k$ $1$-bits} (I);
  \draw[->] (I) to node [sloped, midway,above ] {\tiny accept} node [sloped, midway,below ] {\tiny offspring} (F);
  \draw[->] (J) to node [sloped, midway,above ] {\tiny mutate $x^{(1-r)}$, flip} node [sloped, midway,below ] {\tiny one $0$-bit and $k$ $1$-bits} (K);
  \draw[->] (K) to node [sloped, midway,above ] {\tiny discard $x^0$} (L);
  \draw[->] (K) to node [sloped, midway,above ] {\tiny discard $x^{(1-r)}$} (M);
  \draw[->] (K) to node [sloped, midway,above ] {\tiny discard $x^{(2-r-k)}$} (N);
  \draw[->] (J) to node [sloped, midway,above ] {\tiny mutate $x^{0}$, flip one} node [sloped, midway,below ] {\tiny $0$-bit and $k$ $1$-bits} (O);
  \draw[->] (O) to node [sloped, midway,above ] {\tiny discard $x^{(1-r)}$} (U);
  \draw[->] (O) to node [sloped, midway,above ] {\tiny discard $x^{(1-k)}$} (V);
  \draw[->] (O) to node [sloped, midway,above ] {\tiny discard $x^{0}$} (W);
  \draw[->] (J) to node [sloped, midway,above ] {\tiny mutate $x^{(1-r)}$} node [sloped, midway,below ] {\tiny flip $k$ $1$-bits} (P);
  \draw[->] (P) to[bend right] node [sloped, midway,above ] {\tiny discard offspring} (J);
  \draw[->] (P) to node [sloped, midway,above ] {\tiny accept} node [sloped, midway,below ] {\tiny offspring} (S);
  \draw[->] (J) to node [sloped, midway,above ] {\tiny mutate $x^{(1-r)}$} node [sloped, midway,below ] {\tiny flip one $0$-bit} (R);
  \draw[->] (R) to node [sloped, midway,above ] {\tiny discard $x^0$} (X);
  \draw[->] (R) to node [sloped, midway,above ] {\tiny discard $x^{(1-r)}$} (Y);
  \draw[->] (J) to node [sloped, midway,above ] {\tiny mutate $x^{0}$} node [sloped, midway,below ] {\tiny flip one $0$-bit} (Q);
  \draw[->] (Q) to node [sloped, midway,above ] {\tiny discard $x^0$} (T);
  \draw[->] (Q) to node [sloped, midway,above ] {\tiny discard $x^{(1-r)}$} (Z);

\end{tikzpicture}

\caption{State diagram for the \toea up to second order.}\label{fig:transitiontoea}
\end{figure}

All this will follow from a second order approximation of the drift, and most of the section is devoted to this end. Let us begin by referring to Figure~\ref{fig:transitiontoea}. 

From the size of the diagram, one can notice how quickly things get complicated further away from the optimum. On a positive note, we can compute the contribution to the drift from many of the states that the population reaches just by using Lemma~\ref{lem:Fr}, which is tight for $\mu=2$. As a reminder, Lemma~\ref{lem:Fr} states that, given a population of two individuals for the \toea, there is a closed formula for the drift, in case there are no more zero-bit flips. The cases where there happen further zero-bit flips can be summarized by $\mathcal{O}(\epsilon)$. In order to get a second order approximation for the drift, we can only apply this lemma once the population has already flipped two zero-bits (each of which give a factor $O(\eps)$), so that the error term is $\mathcal O(\eps^3)$. In particular, in Figure~\ref{fig:transitiontoea} we have colored the states after two zero-bit flips in red. These are denoted by $G_i(r)$ for $i \in \{1,2,..,9\}$. 

We begin by giving the intuition on how to compute some of the more challenging transition probabilities. We will often have to compute, given a population of 3 individuals, the probability for each of them to be discarded, or more precisely that it gives the least fitness value according to the \dynbv function in that iteration. To compute these probabilities it is helpful to notice if any individual dominates another one, since then it will not be discarded. To compare the remaining ones, one only needs to consider all the bits in which the two or three individuals are different and do a case distinction on which of these will be in the first relative position after the permutation. Sometimes it is not enough to look at the first position only, as it could happen that two individuals share the same value in that position and only the third is different. \medskip

%

The first goal will be to compute the drift from state $\overline F(r)$, depicted in light red in Figure~\ref{fig:transitiontoea}. This state is reached if exactly one one-bit and $r\ge 1$ zero-bits are flipped, and the offspring $x^{(1-r)}$ is accepted. From $\overline F(r)$, we can reach two states $A(r,k)$ and $B(r,k)$ by mutating $x^0$ or $x^{(1-r)}$, respectively, and flipping one zero-bit and $k\geq 1$ one-bits. We will start our analysis by computing the contribution to the drift once the population reaches states $A(r,k)$ and $B(r,k)$. For brevity, we denote
\[\Delta_A := \Delta(A(r,k), \eps), \quad \Delta_B := \Delta(B(r,k), \eps), \quad \Delta_i := \Delta(G_i(r), \eps) \text{ for $i=1,\ldots,9$}\]
To ease reading, we simply write the probability of discarding an individual $x$ as $\Pr(\text{discard $x$})$, without specifying the rest of the population. From Figure~\ref{fig:transitiontoea} it is clear that:
\begin{align*}
    \Delta_A = \Pr(\text{discard $x^{(1-r)}$}) \cdot \Delta_2 + 
    \Pr(\text{discard $x^{(1-k)}$}) \cdot \Delta_3 + 
    \Pr(\text{discard $x^{0}$}) \cdot \Delta_4 
\end{align*}
As discussed at the beginning of this section, we can simply use Lemma~\ref{lem:Fr} to compute:
\begin{align}\label{eq:Delta234}
    \Delta_2 &= \frac{1-k}{k+1}  + \mathcal{O}(\epsilon) \nonumber \\
     \Delta_3 &= \frac{1-r}{r+1}  + \mathcal{O}(\epsilon)\\
     \Delta_4 &= \frac{k+1}{k+r+2} \cdot (1-r) + \frac{r+1}{k+r+2} \cdot (1-k) + \mathcal{O}(\epsilon)= \frac{2-2rk}{k+r+2}  + \mathcal{O}(\epsilon)\nonumber
    \end{align}
Next up, are the probabilities to discard each individual. For that, we will introduce some notation similar as in Definition~\ref{def:B_i}. We sort the positions descendingly according to the next fitness function $f^{t}$. For $i \in \{r,k\}$, the following notation applies to state $A(r,k)$ and with respect to $f^t$.
\begin{itemize}
    \item $F_3 := \text{first among the $r+k+2$ positions in which $x^{0}$, $x^{(1-k)}$ and $x^{(1-r)}$ differ}$.
    \item $F_2^i := \text{first among the $i+1$ positions in which $x^{0}$ and $x^{(1-i)}$ differ}$.
    \item $B_i^0 := \text{set of the i positions where $x^{(1-i)}$ has additional zero-bits over the others}$.
    \item $B_i^1 := \text{position where $x^{(1-i)}$ has the single additional one-bit over the others}$.
\end{itemize}
The probability that $x^{(1-r)}$ is discarded can be computed in the same way as in the proof of Lemma~\ref{lem:Fr}: 
\begin{align}\label{eq:discardr}
    \Pr(\text{discard $x^{(1-r)}$}) &= \Pr(F_3 \in B_r^0) + \Pr(F_3 = B_k^1) \cdot \Pr(F_2^r \in B_r^0 \mid F_3 = B_k^1)\nonumber\\
    & = \frac{r}{r+k+2} + \frac{1}{r+k+2} \cdot \frac{r}{r+1} = \frac{r(r+2)}{(r+k+2)(r+1)}.
\end{align}
Similarly, we have:
\begin{align}\label{eq:discardk}
    \Pr(\text{discard $x^{(1-k)}$}) &= \Pr(F_3 \in B_k^0) + \Pr(F_3 = B_r^1) \cdot \Pr(F_2^k \in B_k^0 \mid F_3 = B_r^1)\nonumber\\
    & = \frac{k}{r+k+2} + \frac{1}{r+k+2} \cdot \frac{k}{k+1} = \frac{k(k+2)}{(r+k+2)(k+1)}.
\end{align}
and
\begin{align}\label{eq:discard0}
    \Pr(\text{discard $x^{0}$}) &= \Pr(F_3 = B_k^1) \cdot \Pr(F_2^r = B_r^1 \mid F_3 = B_k^1) \nonumber\\
     & \quad + \Pr(F_3 = B_r^1) \cdot \Pr(F_2^k = B_k^1 \mid F_3 = B_r^1)\nonumber\\
    & = \frac{1}{r+k+2} \cdot \frac{1}{r+1} + \frac{1}{r+k+2} \cdot \frac{1}{k+1} = \frac{1}{(r+1)(k+1)}.
\end{align}
Putting~\eqref{eq:Delta234}, \eqref{eq:discardr}, \eqref{eq:discardk} and \eqref{eq:discard0} together yields the drift $\Delta_A$:
\begin{align*}
    \Delta_A &= \Pr(\text{discard $x^{(1-r)}$}) \cdot \Delta_2 + \Pr(\text{discard $x^{(1-k)}$}) \cdot \Delta_3 + \Pr(\text{discard $x^{0}$}) \cdot \Delta_4 \\
    & = \mathcal{O}(\epsilon) +\frac{r(r+2)(1-k) + k(k+2)(1-r) + 2-2rk}{(r+k+2)(r+1)(k+1)}.
\end{align*}
Following the same exact procedures we can compute $\Delta_B$. In particular, we again have:
\begin{align*}
    \Delta_B = \Pr(\text{discard $x^0$}) \cdot \Delta_7 + \Pr(\text{discard $x^{(1-r)}$}) \cdot \Delta_8 + \Pr(\text{discard $x^{(2-r-k)}$}) \cdot \Delta_9
\end{align*}
Note the abuse of notation, where we omitted the rest of the population. In particular, the above probabilities are not the same as in the previous part of the proof, since the underlying population is different. We begin by applying Lemma~\ref{lem:Fr}, which yields:
\begin{align}\label{eq:Delta789}
    \Delta_7 &= \frac{k}{k+1} \cdot (1-r) + \frac{1}{k+1} \cdot (2-r-k) + \mathcal{O}(\epsilon) = \frac{2-r-rk}{k+1} + \mathcal{O}(\epsilon)\nonumber\\
    \Delta_8 &= \frac{2\cdot (2-r-k)}{r+k+2}   + \mathcal{O}(\epsilon) \\
    \Delta_9 &= \frac{1-r}{r+1} + \mathcal{O}(\epsilon)\nonumber
\end{align}
Similar, as before, we sort the positions descendingly according to the current fitness function $f^t$. In the following, the last three definitions are identical as above and are only restated for convenience: 
\begin{itemize}
     \item $\hat{F}_3 := \text{first among the $r+k+2$ positions in which $x^{0}$, $x^{(1-r)}$ and $x^{(2-r-k)}$ differ.}$ 
    \item $\hat{F}_2^{r+k} := \text{first among the $k+1$ positions in which $x^{(1-r)}$ and $x^{(2-r-k)}$ differ.}$ 
    \item $\hat{B}_k^0 := \text{set of the $k$ positions where $x^{(2-r-k)}$ has additional zero-bits over the others.}$
    \item $\hat{B}_k^1 := \text{position where $x^{(2-r-k)}$ has the single additional one-bit over the others.}$
    \item $F_2^r := \text{first among the $r+1$ positions in which $x^{0}$ and $x^{(1-r)}$ differ.}$
    \item $B_r^0 := \text{set of the r positions where $x^{(1-r)}$ has additional zero-bits over $x^0$.}$
    \item $B_r^1 := \text{position where $x^{(1-r)}$ has the single additional one-bit over $x^0$.}$
\end{itemize}
We can follow the same reasoning as before and compute:
\begin{align}\label{eq:discard0B}
    \Pr(\text{discard $x^{0}$}) &= \Pr(\hat{F}_3 = B_r^1) + \Pr(\hat{F}_3 = \hat{B}_k^1) \cdot \Pr(F_2^r = B_r^1 \mid \hat{F}_3 = \hat{B}_k^1) \nonumber\\
    & = \frac{1}{r+k+2} + \frac{1}{r+k+2} \cdot \frac{1}{r+1} = \frac{r+2}{(r+1)(r+k+2)}.
\end{align}
\begin{align}\label{eq:discardrB}
    \Pr(\text{discard $x^{(1-r)}$}) &= \Pr(\hat{F}_3 \in B_r^0) \cdot \Pr(\hat{F}_2^{r+k} = \hat{B}_k^1 \mid \hat{F}_3 \in B_r^0) +\nonumber \\
    & \quad + \Pr(\hat{F}_3 = \hat{B}_k^1) \cdot \Pr(F_2^r \in B_r^0 \mid \hat{F}_3 = \hat{B}_k^1)\nonumber\\
    & = \frac{r}{r+k+2} \cdot \frac{1}{k+1} + \frac{1}{r+k+2} \cdot \frac{r}{r+1} = \frac{r}{(r+1)(k+1)}.
\end{align}
\begin{align}\label{eq:discardkB}
    \Pr(\text{discard $x^{(2-r-k)}$}) &= \Pr(\hat{F}_3 \in \hat{B}_k^0) + \Pr(\hat{F}_3 \in B_r^0) \cdot \Pr(\hat{F}_2^{r+k} \in \hat{B}_k^0 \mid \hat{F}_3 \in B_r^0)  \nonumber\\
    & = \frac{k}{r+k+2} + \frac{r}{r+k+2} \cdot \frac{k}{k+1} = \frac{k(r+k+1)}{(k+1)(r+k+2)}.
\end{align}
Combining~\eqref{eq:Delta789}, \eqref{eq:discard0B}, \eqref{eq:discardrB} and \eqref{eq:discardkB}, we obtain: 
\begin{align*}
    \Delta_B &= \Pr(\text{discard $x^0$}) \cdot \Delta_7 + \Pr(\text{discard $x^{(1-r)}$}) \cdot \Delta_8 + \Pr(\text{discard $x^{(2-r-k)}$}) \cdot \Delta_9  \\
    & = \mathcal{O}(\epsilon) + \frac{(r+2)(2-r-rk) + 2r(2-r-k) + k(1-r)(r+k+1)}{(k+1)(r+1)(k+r+2)}\\
    & = \mathcal{O}(\epsilon) + \frac{-2 r^2 k  -r  k^2 -3  r^2- 4  r  k  + k^2 + 4  r+ k + 4}{(k+1)  (r+1)  (k+r+2)}.
\end{align*}
Next up, we compute the contribution to the drift $\Delta_F := \Delta(F(r),\eps)$ from state $F(r)$. Using Lemma~\ref{lem:Fr}, we get: 
\begin{align}\label{eq:Delta56}
    \Delta_5 & = \frac{2\cdot(2-r)}{2+r} + \mathcal{O}(\epsilon).\nonumber\\
    \Delta_6 & = \frac{r+1}{r+2} \cdot (1-r) + \frac{1}{r+2} \cdot 1 + \mathcal{O}(\epsilon) = \frac{2-r^2}{r+2} + \mathcal{O}(\epsilon).
\end{align}
To compute $\Delta_F$, we first name and compute some probabilities for the outcome of a mutation.
\begin{itemize}
    \item $p_0\ \ := \Pr(\text{flip exactly one zero-bit}) = (1+o(1))  c  \epsilon  e^{-c}$ .
    \item $p^{k \cdot 1} := \Pr(\text{flip $k$ one-bits}) = (1+o(1))  \frac{c^k}{k!}  e^{-c}$.
    \item $p_0^{k \cdot 1} := \Pr(\text{flip one zero-bit and $k$ one-bits}) = (1+o(1))  \epsilon  \frac{c^{k+1}}{k!}  e^{-c}$.
\end{itemize}
We are finally ready to compute $\Delta_F$ from Figure~\ref{fig:transitiontoea}. As usual, Lemma~\ref{lem:singlezerobit} allows us to summarize the contribution of all states that are not shown in Figure~\ref{fig:transitiontoea} by $\mathcal O(\eps^2)$. The factor $1/2$ comes from the choice of the parent $x^0$ or $x^{(1-r)}$, and the inner factors $1/(r+1)$, $r/(r+1)$ etc. correspond the probabilities of following the subsequent arrows as depicted in Figure~\ref{fig:transitiontoea}. The six summands correspond to the cases of flipping exactly one zero-bit (in $x^0$ or $x^{(1-r)}$), flipping $k$ one-bits (in $x^0$ or $x^{(1-r)}$), and flipping one zero-bit and $k$ one-bits (in $x^0$ or $x^{(1-r)}$), in this order.
\begin{align*}
    \Delta_{F} &= \mathcal{O}(\epsilon^2) + \frac{1}{2}p_0 \left[\tfrac{1}{r+1} \cdot \Delta_6 + \tfrac{r}{r+1} \cdot 1 \right] + \frac{1}{2}p^{k \cdot 1}  \left[\tfrac{r}{r+1} \cdot \Delta_5 + \tfrac{1}{r+1} \cdot (2-r) \right] + \\
    & + \sum_{k=0}^{(1-\epsilon)  n} \bigg(\frac{1}{2}p^{k\cdot 1}  \left[\tfrac{r}{r+k+1} \cdot 0  + \tfrac{k+1}{r+k+1} \cdot \Delta_F \right] + \\
    & \qquad+ \frac{1}{2}p^{k\cdot 1}  \left[\tfrac{1}{r+k+1} \cdot (1-r) + \tfrac{k+r}{r+k+1} \cdot \Delta_F \right]  + \frac{1}{2}p_0^{k \cdot 1} \Delta_A(k) +  \frac{1}{2}p_0^{k \cdot 1} \Delta_B(k)\bigg). \\
\end{align*}
We simplify and sort the expression: 
\begin{align*}
    \Delta_F  &= \mathcal{O}(\epsilon^2) + \frac12 p_0  \left[\frac{\Delta_6 + 2-r}{r+1}  + \frac{r(1+\Delta_5)}{r+1} \right] + \Delta_F  \left(\sum_{k=0}^{(1-\epsilon) n} \frac12 p^{k \cdot 1}  \frac{2 k+r+1}{r+k+1} \right) + \\
    & + \sum_{k=0}^{(1-\epsilon)  n} \left(\frac12 p^{k \cdot 1} \frac{(1-r)}{r+k+1} + \frac12p_0^{k \cdot 1}  \left(\Delta_A(k) + \Delta_B(k)\right) \right)\\
\end{align*}
Now we solve for $\Delta_F$ by bringing all $\Delta_F$-terms on the left hand side and dividing by its prefactor. We obtain:
\begin{align*}
    \Delta_F  &= \mathcal{O}(\epsilon^2) + \frac{p_0 \frac{2 + \Delta_6 + r \cdot \Delta_5}{r+1} + \sum_{k=0}^{(1-\epsilon) n} p^{k \cdot 1} \frac{(1-r)}{r+k+1} + p_0^{k \cdot 1}  \left(\Delta_A(k) + \Delta_B(k)\right)}{2-\sum_{k=0}^{(1-\epsilon)  n}  p^{k \cdot 1} \frac{2 k+r+1}{r+k+1}}.
\end{align*}
For later reference we note that the $p^{k \cdot 1}$ sum up to one. Thus we can rewrite $2 = \sum_k p^{k \cdot 1} \frac{2r+2k+2}{r+k+1}$, which allows us to rewrite the denominator as $\sum_{k=0}^{(1-\epsilon)  n}  p^{k \cdot 1} \frac{r+1}{r+k+1}$. This elegant trick will allow us some cancellations later. In particular, note that the probabilities $p_0$ and $p_0^{k\cdot 1}$ are in $\mathcal O(\eps)$, so when we ignore those terms, then the complicated sums cancel out and we recover the formula $\Delta_F = (1-r)/(r+1) + \mathcal O(\eps)$ from Lemma~\ref{eq:hatp}.

Lastly, we can find the drift  $\Delta(c, \epsilon)$ from the starting population $X^0_2$. To this end, we need two more probabilities:
%
\begin{itemize}
    \item $p_{2 \cdot 0} := \Pr(\text{flip two zero-bits}) =  (1+o(1)) \frac{1}{2}  \epsilon^2  c^2  e^{-c}$
    \item $p_{2 \cdot 0}^{r \cdot 1} := \Pr(\text{flip two zero-bits and r one-bits}) = (1+o(1)) \frac{1}{2} \epsilon^2  e^{-c} \frac{c^{r+2}}{r!}$
\end{itemize}
Again following Figure~\ref{fig:transitiontoea}, we can calculate $\Delta(c,\eps)$. In the following calculation, we use the full notation $\Delta_{F(r)} = \Delta_F$ to make the dependency on $r$ explicit. Moreover, note that we already have computed the drift from state $G_1(r)$, since this is identical with $G_5(r)$, so the drift is $\Delta_5$.
\begin{align*}
    \Delta(c,\epsilon) &= \mathcal{O}(\epsilon^3) + p_0 \cdot 1 + p_{2 \cdot 0} \cdot 2 + \sum_{r=0}^{(1-\epsilon)  n} p^{r \cdot 1} \cdot 0 + \\
    & \quad + \sum_{r=1}^{(1-\epsilon)  n}\left( p_0^{r \cdot 1} \big(\tfrac{1}{r+1} \cdot \Delta_{F(r)} + \tfrac{r}{r+1} \cdot 0\big) + p_{2 \cdot 0}^{r \cdot 1} \big(\tfrac{2}{2+r} \cdot \Delta_5 + \tfrac{r}{r+2} \cdot 0 \big)\right) \\
    & = \mathcal{O}(\epsilon^3) + p_0 + 2 p_{2 \cdot 0} + \sum_{r=1}^{(1-\epsilon)  n} \frac{p_0^{r \cdot 1}}{r+1}  \Delta_{F(r)} + \frac{2p_{2 \cdot 0}^{r \cdot 1}}{2+r} \Delta_5 .
\end{align*}

\begin{figure}[h!]
\includegraphics[width = 1.0\textwidth]{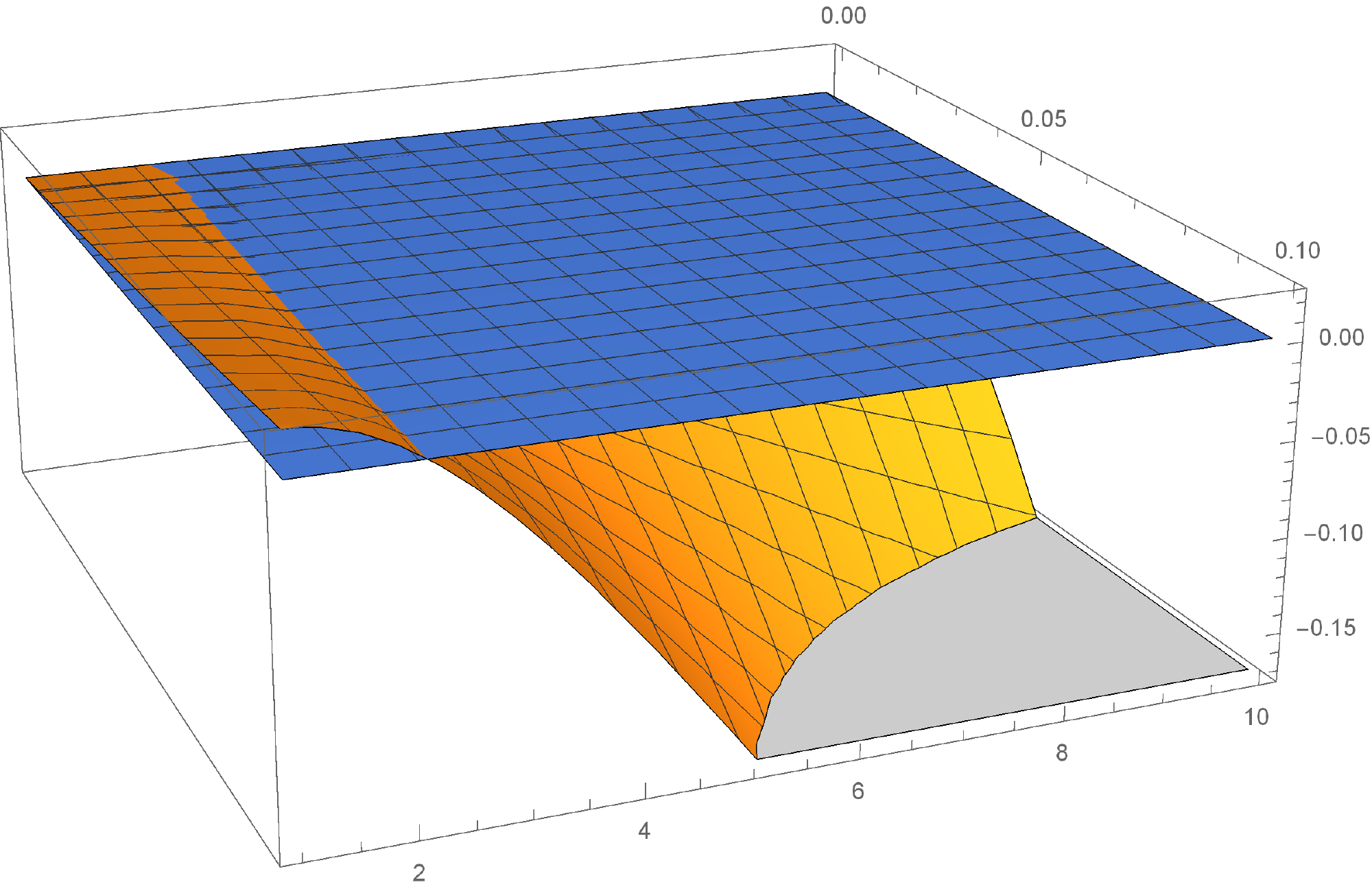}
\caption{Plot of the second order approximation (orange) for the drift of the \toea. The x-axis is the mutation parameter $c$, the y-axis is the distance $\epsilon$ from the optimum. The blue plane is the 0 plane. The interesting part is the line of intersection between the blue and orange surface, as this is boundary between positive and negative drift. Looking closely, the intersection moves to the left (smaller $c$) if we move to the front (larger $\eps$). Thus the problem becomes harder (smaller threshold for $c$) as we increase $\eps$. Hence, the hardest part is not around the optimum. In particular, for some choices of the mutation parameter $c$ (e.g., $c=2.2$) the drift of the \toea is positive in a region around the optimum, but is negative further away from the optimum.  We prove this surprising result below, and it is in line with the experimental results found in \cite{lengler2020large}.}\label{fig:plot}
\end{figure}

The last step is to plug in the formulas for the probabilities and sort terms. Also, letting the sums go to $\infty$ instead of $(1-\eps)n$ will only add another factor of $(1+o(1))$. 
Thus we can rewrite the drift as:
\begin{align}\label{eq:secondorderdelta}
    \Delta(c, \epsilon) &= \epsilon  (1+o(1))  f_0(c) + \epsilon^2  (1+o(1))  f_1(c) + \mathcal{O}(\epsilon^3),
\end{align}
where
\begin{align}\label{eq:f0}
    f_0(c) &=  c  e^{-c} + \sum_{r=1}^{\infty} \frac{c^{r+1}}{r!} e^{-c} \cdot \frac{1}{r+1} \cdot \frac{\sum_{k=0}^{\infty} p^{k \cdot 1}  \frac{(1-r)}{r+k+1}}{\sum_{k=0}^{\infty} p^{k \cdot 1}  \frac{r+1}{r+k+1}} \nonumber\\
    & = c  e^{-c} \cdot \bigg(1+\sum_{r=1}^{\infty} \frac{c^r}{(r+1)!} \cdot \frac{1-r}{r+1} \bigg)
\end{align}
and
\begin{align}\label{eq:f1}
    f_1(c) 
    & = c^2   e^{-c} + \sum_{r=1}^{\infty} (r+1)   e^{-c}  \frac{c^{r+2}}{(r+2)!}  \Delta_5 \nonumber \\
    & + \frac{e^{-2 \cdot c}}{2}  \sum_{r=1}^{\infty} \frac{c^{r+2}}{(r+1)!}  \cdot \frac{\frac{2 + \Delta_6 + r  \Delta_5}{r+1} + \sum_{k=0}^{\infty} \frac{c^k}{k!}  \left(\Delta_A(k) + \Delta_B(k)\right)}{\sum_{k=0}^{\infty} \frac{c^k}{k!}e^{-c}  \frac{r+1}{r+k+1}}.
\end{align}

After all this preliminary calculations, we are now ready to prove Theorem~\ref{thm:hardestregion}. Moreover, we plot the second order approximation of the drift numerically with Wolfram Mathematica in Figure~\ref{fig:plot}. 
\begin{proof}[of Theorem~\ref{thm:hardestregion}]
Recall that the second order approximation of the drift is given by~\eqref{eq:secondorderdelta}, \eqref{eq:f0} and \eqref{eq:f1}. Inspecting~\eqref{eq:f0}, we see that the sum goes over negative terms, except for the term for $r=1$ which is zero. Thus the factor in the bracket is strictly decreasing in $c$, ranging from 1 (for $c=0$) to $-\infty$ (for $c \to \infty$). In particular, there is exactly one $c_0 >0$ such that $f_0(c_0) = 0$. Numerically we find $c_0= 2.4931\ldots$ and $f_1(c_0) = -0.4845 \ldots < 0$.  

In the following, we will fix some $c^* < c_0$ and set $\eps^* := -f_0(c^*)/f_1(c^*)$. Note that by choosing $c^*$ sufficiently close to $c_0$ we can assume that $f_1(c^*) < 0$, since $f_1$ is a continuous function. Due to the discussion of $f_0$ above, the choice $c^* < c_0$ also implies $f_0(c^*)>0$. Thus $\eps^* >0$. Moreover, since $f_0(c) \to 0$ for $c\to c^*$, if we choose $c^*$ close enough to $c_0$ then we can make $\eps^*$ as close to zero as we wish. 

To add some intuition to these definitions, note that $\Delta(c,\eps) = \eps(f_0(c) + \epsilon f_1(c)+ \mathcal O(\eps^2))$, so the condition $\epsilon = -f_0(c)/f_1(c)$ is a choice for $\eps$ for which the drift is approximately zero, up to the error term. We will indeed prove that for fixed $c^*$, the sign of the drift switches around $\eps \approx \eps^*$. More precisely, we will show that the sign switches from positive to negative as we go from $\Delta(c^*,\epsilon^* - \epsilon{'})$ to $\Delta(c^*,\epsilon^*+ \epsilon{'})$, for $\epsilon' \in (0,\epsilon^*)$. Actually, we will constrict to $\eps' \in (\eps^*/2,\eps^*)$ so that we can handle the error terms. This implies that the value $c^*$ yields positive drift close to the optimum (in the range $\eps \in (0,\tfrac12\eps^*)$), but yields negative drift further away from the optimum (in the range $\eps \in (\tfrac32\eps^*,2\eps^*)$). This implies Theorem~\ref{thm:hardestregion}.

To study the sign of the drift, we define
\begin{align*}
    \Delta^*(c,\epsilon) & := \frac{\Delta(c, \epsilon)}{\epsilon} = (1+o(1)) \cdot \left(f_0(c) + \epsilon \cdot f_1(c) + O(\eps^2)\right).
\end{align*}
It is slightly more convenient to consider $\Delta^*$ instead of $\Delta$, but note that both terms have the same sign. So it remains to investigate the sign of $\Delta^*(c^*,\epsilon^* - \epsilon{'})$ and $\Delta^*(c^*,\epsilon^* + \epsilon{'})$ for $\eps' \in (\eps^*/2,\eps^*)$. We will only study the first term, the second one can be analyzed analogously. Recalling the definition of $\eps^*$ and that $f_1(c^*)<0$, we have
\begin{align*}
    \Delta^*(c^*,\epsilon^*- \epsilon{'}) &= (1+o(1))  \left(f_0(c^*) + (\epsilon^*- \epsilon{'})  f_1(c^*) \right) + \mathcal O((\eps^*)^2)\\
    & = (1+o(1))  \big(\underbrace{f_0(c^*) + \epsilon^*  f_1(c^*)}_{=0}\big) - (1+o(1))  \underbrace{\big(\epsilon{'}  f_1(c^*)\big)}_{<\eps^*f_1(c^*)/2}  + \mathcal O((\eps^*)^2)\\
    & > -(1+o(1))\tfrac12 \eps^*f_1(c^*) + \mathcal O((\eps^*)^2).
\end{align*}
Recall that we may choose $\eps^*$ as small as we want. In particular, we can choose it so small that the above term has the same sign as the main term, which is positive due to $f_1(c^*)<0$. Hence $\Delta^*(c^*,\epsilon^* - \epsilon{'}) >0$, as desired. Also note that the lower bound is independent of $\eps'$, i.e., it holds uniformly for all $\eps'\in(\eps^*/2,\eps^*)$, which corresponds to the argument $\eps^*-\eps'$ of $\Delta^*$ to be in the interval $(0,\eps^*/2)$. The inequality $\Delta^*(c^*,\epsilon^* + \epsilon{'}) <0$ follows analogously. This concludes the proof.\qed
\end{proof}


\section{Conclusion}
We have explored the \dynbv function, and we have found that the \moea profits from large population size, close to the optimum. In particular, for all choices of the mutation parameter $c$, the \moea is efficient around the optimum if $\mu$ is large enough. However, surprisingly the region around the optimum may not be the most difficult region. For $\mu=2$, we have proven that it is not.

This surprising result, in line with the experiments in \cite{lengler2020large}, raises much more questions than it answers. Does the \moea with increasing $\mu$ turn efficient for a larger and larger ranges of $c$, as the behavior around the optimum suggests? Or is the opposite true, that the range of efficient $c$ shrinks to zero as the population grows, as it is the case for the \moea on \hottopic functions? Where is the hardest region for larger $\mu$? Around the optimum or elsewhere?

For the \moga, the picture is even less complete. Experiments in \cite{lengler2020large} indicated that the hardest region of \dynbv for the \moga is around the optimum, and that the range of efficient $c$ increases with $\mu$. But the experiments were only run for $\mu \leq 5$, and formal proofs are missing. Should we expect that the discrepancy between \moga (hardest region around optimum) and \moea (hardest region elsewhere) remains if we increase the population size, and possibly becomes stronger? Or does it disappear? For \hottopic functions, we know that around the optimum, the range of efficient $c$ becomes arbitrarily large as $\mu$ grows (similarly as we have shown for the \moea on \dynbv), but we have no idea what the situation away from the optimum is. 

The similarities of results between \dynbv and \hottopic functions are striking, and we are pretty clueless where they come from. For example, the analysis of the \moea on \hottopic away from the optimum in \cite{lengler2019exponential} clearly does not generalize to \dynbv since the very heart of the proof is that the weights do not change over long periods. In \dynbv, they change every round. Nevertheless, experiments and theoretical results indicate that the outcome is similar in both cases. Perhaps one could gain insight from ``interpolating'' between \dynbv and \hottopic by re-drawing the weights not every round, but only every $k$-th round. 

 In general, the situation away from the optimum is governed by complex population dynamics, which is why the \moea and the \moga might behave very differently. Currently, we lack the theoretic means to understand population dynamics in which the internal population structure is complex and essential. The authors believe that developing tools for understanding such dynamics is one of the most important projects for improving our understanding of population-based search heuristics.



\bibliographystyle{spmpsci}
\bibliography{refs}

\begin{thebibliography}{10}
\providecommand{\url}[1]{{#1}}
\providecommand{\urlprefix}{URL }
\expandafter\ifx\csname urlstyle\endcsname\relax
  \providecommand{\doi}[1]{DOI~\discretionary{}{}{}#1}\else
  \providecommand{\doi}{DOI~\discretionary{}{}{}\begingroup
  \urlstyle{rm}\Url}\fi

\bibitem{colin2014monotonic}
Colin, S., Doerr, B., F{\'e}rey, G.: Monotonic functions in ec: anything but
  monotone!
\newblock In: Genetic and Evolutionary Computation Conference (GECCO), pp.
  753--760. ACM (2014)

\bibitem{dang2018new}
Dang-Nhu, R., Dardinier, T., Doerr, B., Izacard, G., Nogneng, D.: A new
  analysis method for evolutionary optimization of dynamic and noisy objective
  functions.
\newblock In: Genetic and Evolutionary Computation Conference (GECCO), pp.
  1467--1474. ACM (2018)

\bibitem{doerr2013adaptive}
Doerr, B., Goldberg, L.A.: Adaptive drift analysis.
\newblock Algorithmica \textbf{65}(1), 224--250 (2013)

\bibitem{doerr2012ants}
Doerr, B., Hota, A., K{\"o}tzing, T.: Ants easily solve stochastic shortest
  path problems.
\newblock In: Genetic and Evolutionary Computation Conference (GECCO), pp.
  17--24. ACM (2012)

\bibitem{doerr2013mutation}
Doerr, B., Jansen, T., Sudholt, D., Winzen, C., Zarges, C.: Mutation rate
  matters even when optimizing monotonic functions.
\newblock Evolutionary Computation \textbf{21}(1), 1--27 (2013)

\bibitem{doerr2012multiplicative}
Doerr, B., Johannsen, D., Winzen, C.: Multiplicative drift analysis.
\newblock Algorithmica \textbf{64}(4), 673--697 (2012)

\bibitem{droste2002analysis}
Droste, S.: Analysis of the {$(1+ 1)$ EA} for a dynamically changing
  onemax-variant.
\newblock In: Congress on Evolutionary Computation (CEC), vol.~1, pp. 55--60.
  IEEE (2002)

\bibitem{horoba2010ant}
Horoba, C., Sudholt, D.: Ant colony optimization for stochastic shortest path
  problems.
\newblock In: Genetic and Evolutionary Computation Conference (GECCO), pp.
  1465--1472. ACM (2010)

\bibitem{jansen2007brittleness}
Jansen, T.: On the brittleness of evolutionary algorithms.
\newblock In: Foundations of Genetic Algorithms (FOGA), pp. 54--69. Springer
  (2007)

\bibitem{kotzing2015generalized}
K{\"o}tzing, T., Lissovoi, A., Witt, C.: {(1+1) EA} on generalized dynamic
  onemax.
\newblock In: Foundations of Genetic Algorithms (FOGA), pp. 40--51. Springer
  (2015)

\bibitem{kotzing2012aco}
K{\"o}tzing, T., Molter, H.: {ACO beats EA on a dynamic pseudo-boolean
  function}.
\newblock In: Parallel Problem Solving from Nature (PPSN), pp. 113--122.
  Springer (2012)

\bibitem{lengler2019general}
Lengler, J.: A general dichotomy of evolutionary algorithms on monotone
  functions.
\newblock IEEE Transactions on Evolutionary Computation \textbf{24}(6),
  995--1009 (2019)

\bibitem{lengler2020drift}
Lengler, J.: Drift analysis.
\newblock In: Theory of Evolutionary Computation, pp. 89--131. Springer (2020)

\bibitem{lengler2019does}
Lengler, J., Martinsson, A., Steger, A.: When does hillclimbing fail on
  monotone functions: An entropy compression argument.
\newblock In: Analytic Algorithmics and Combinatorics (ANALCO), pp. 94--102.
  SIAM (2019)

\bibitem{lengler2020large}
Lengler, J., Meier, J.: Large population sizes and crossover help in dynamic
  environments.
\newblock In: Parallel Problem Solving from Nature (PPSN), pp. 610--622.
  Springer (2020)

\bibitem{lengler2020fullversion}
Lengler, J., Meier, J.: Large population sizes and crossover help in dynamic
  environments, full version.
\newblock arXiv preprint  (2020).
\newblock \url{http://arxiv.org/abs/2004.09949}

\bibitem{lengler2020runtime}
Lengler, J., Riedi, S.: {Runtime analysis of the $(\mu+ 1)$-EA on the Dynamic
  BinVal function} pp. 84--99 (2021)

\bibitem{lengler2018noisy}
Lengler, J., Schaller, U.: The {(1+1)-EA} on noisy linear functions with random
  positive weights.
\newblock In: Symposium Series on Computational Intelligence (SSCI), pp.
  712--719. IEEE (2018)

\bibitem{lengler2018drift}
Lengler, J., Steger, A.: Drift analysis and evolutionary algorithms revisited.
\newblock Combinatorics, Probability \& Computing \textbf{27}(4), 643--666
  (2018)

\bibitem{lengler2019exponential}
Lengler, J., Zou, X.: Exponential slowdown for larger populations: the
  {($\mu$+1)-EA} on monotone functions.
\newblock In: Foundations of Genetic Algorithms (FOGA), pp. 87--101. ACM (2019)

\bibitem{lissovoi2016mmas}
Lissovoi, A., Witt, C.: {MMAS versus population-based EA on a family of dynamic
  fitness functions}.
\newblock Algorithmica \textbf{75}(3), 554--576 (2016)

\bibitem{lissovoi2017runtime}
Lissovoi, A., Witt, C.: A runtime analysis of parallel evolutionary algorithms
  in dynamic optimization.
\newblock Algorithmica \textbf{78}(2), 641--659 (2017)

\bibitem{lissovoi2018impact}
Lissovoi, A., Witt, C.: The impact of a sparse migration topology on the
  runtime of island models in dynamic optimization.
\newblock Algorithmica \textbf{80}(5), 1634--1657 (2018)

\bibitem{neumann2020analysis}
Neumann, F., Pourhassan, M., Roostapour, V.: Analysis of evolutionary
  algorithms in dynamic and stochastic environments.
\newblock In: Theory of Evolutionary Computation, pp. 323--357. Springer (2020)

\bibitem{neumann2015runtime}
Neumann, F., Witt, C.: On the runtime of randomized local search and simple
  evolutionary algorithms for dynamic makespan scheduling.
\newblock In: International Joint Conference on Artificial Intelligence
  (IJCAI), pp. 3742--3748. AAAI Press (2015)

\bibitem{pourhassan2015maintaining}
Pourhassan, M., Gao, W., Neumann, F.: Maintaining 2-approximations for the
  dynamic vertex cover problem using evolutionary algorithms.
\newblock In: Genetic and Evolutionary Computation Conference (GECCO), pp.
  903--910. ACM (2015)

\bibitem{shi2019reoptimization}
Shi, F., Schirneck, M., Friedrich, T., K{\"o}tzing, T., Neumann, F.:
  Reoptimization time analysis of evolutionary algorithms on linear functions
  under dynamic uniform constraints.
\newblock Algorithmica \textbf{81}(2), 828--857 (2019)

\end{thebibliography}
\end{document}